\documentclass[10pt]{article} 
\usepackage{arxiv}

\usepackage{amsmath,amsfonts,bm}

\def\eqref#1{equation~\ref{#1}}

\def\1{\bm{1}}

\DeclareMathAlphabet{\mathsfit}{\encodingdefault}{\sfdefault}{m}{sl}
\SetMathAlphabet{\mathsfit}{bold}{\encodingdefault}{\sfdefault}{bx}{n}

\newcommand{\E}{\mathbb{E}}

\newcommand{\R}{R}

\newcommand{\KL}{\mathrm{D}_{\mathrm{KL}}}

\DeclareMathOperator*{\argmax}{arg\,max}

\RequirePackage{xspace}

\newcommand{\ie}{\textit{i.e.}\xspace}

\newcommand{\eg}{\textit{e.g.}\xspace}

\newcommand{\iid}{\textit{i.i.d.}\xspace}

\DeclareMathOperator*{\Esp}{\mathbb{E}}

\newcommand{\Dcal}{{\mathcal{D}}}

\newcommand{\Mcal}{{\mathcal{M}}}

\newcommand{\Wcal}{{\mathcal{W}}}
\newcommand{\Xcal}{{\mathcal{X}}}
\newcommand{\Ycal}{{\mathcal{Y}}}

\newcommand{\Rbb}{{\mathbb{R}}}

\newcommand{\kl}{{\textup{kl}}}
\newcommand{\MV}{{\textup{MV}}}

\newcommand{\I}{{\textup{I}}}

\DeclareMathOperator*{\Prob}{\mathbb{P}}

\DeclareMathOperator{\balpha}{\boldsymbol{\alpha}}
\DeclareMathOperator{\bbeta}{\boldsymbol{\beta}}
\DeclareMathOperator{\brho}{\boldsymbol{\rho}}
\DeclareMathOperator{\bpi}{\boldsymbol{\pi}}

\DeclareMathOperator{\A}{\mathcal{A}}
\DeclareMathOperator{\D}{\Dcal}
\DeclareMathOperator{\Hcal}{\mathcal{H}}
\DeclareMathOperator{\jointerr}{\mathcal{E}}
\DeclareMathOperator{\dis}{\mathfrak{D}}
\DeclareMathOperator{\cbound}{\mathcal{C}}

\newcommand{\wrongweight}{w_{\post}}
\newcommand{\Sone}{S_1}
\newcommand{\Stwo}{S_2}
\newcommand{\none}{{n_1}}
\newcommand{\ntwo}{{n_2}}
\newcommand{\Hcalone}{\Hcal_1}
\newcommand{\Hcaltwo}{\Hcal_2}
\newcommand{\Wcalone}{\Wcal_1}
\newcommand{\Wcaltwo}{\Wcal_2}
\newcommand{\hyperpriorone}{\hyperprior_1}
\newcommand{\hyperpriortwo}{\hyperprior_2}

\newcommand{\posteriorone}{\posterior_{1}}
\newcommand{\posteriortwo}{\posterior_{2}}

\newcommand{\Aone}{A_{1}}
\newcommand{\Atwo}{A_{2}}

\newcommand{\prior}{\bpi}
\newcommand{\post}{\brho}
\newcommand{\posterior}{\post}

\newcommand{\hyperprior}{P}
\newcommand{\hyperposterior}{Q}

\newcommand{\Risk}{R}
\newcommand{\Riskemp}{\widehat{\Risk}_{S}}
\newcommand{\Risktrue}{\Risk_{\D}}

\newcommand{\gap}{\Psi}

\DeclareMathOperator{\Dir}{\operatorname{Dir}}

\DeclareMathOperator{\DivRen}{\mathrm{D}_\lambda}

\newcommand{\complexity}{\epsilon}

\usepackage{url}
\usepackage{graphicx} 
\let\AND\relax
\usepackage{algorithm}
\usepackage{algorithmic}
\usepackage{amsmath,amssymb, amsthm}
\usepackage{mathtools}
\usepackage{lscape}
\usepackage{makecell}
\usepackage[%
pdfstartview=FitH,%
breaklinks=true,%
bookmarks=false,%
colorlinks=true,%
linkcolor= blue,
anchorcolor=blue,%
citecolor=blue,
filecolor=blue,%
menucolor=blue,%
urlcolor=blue%
]{hyperref}
\usepackage{multicol,lipsum}
\usepackage{pdflscape}
\usepackage[dvipsnames]{xcolor}
\usepackage[english]{babel}
\usepackage{setspace}
\usepackage{bbm}
\usepackage{tikz}
\usepackage{mathrsfs}
\usepackage{amssymb}
\usepackage{nicematrix}
\usepackage{fancyhdr}
\usepackage{lipsum}
\usepackage{subcaption}
\usepackage{rotating}
\usepackage{cleveref}
\usepackage{thmtools} 
\usepackage{thm-restate}
\usepackage{nicefrac}
\usepackage{enumerate}
\usepackage{natbib}

\definecolor{blue}{HTML}{0077BB}
\definecolor{cyan}{HTML}{33BBEE}
\definecolor{green}{HTML}{009988}
\definecolor{orange}{HTML}{EE7733}
\definecolor{red}{HTML}{CC3311}
\definecolor{magenta}{HTML}{EE3377}
\definecolor{grey}{HTML}{BBBBBB}
\definecolor{pink}{HTML}{FF007F}

\definecolor{ao}{rgb}{0.0, 0.5, 0.0}
\newtheorem{theorem}{Theorem}
\newtheorem{corollary}{Corollary}

\usepackage{{booktabs}}

\title{On the Disintegration of the Stochastic Majority Vote: 
From PAC-Bayesian Bounds to a Self-Bounding Algorithm}

\author{Julien Bastian\\
Université Jean Monnet Saint-Étienne, CNRS, Institut d Optique Graduate School,\\
Laboratoire Hubert Curien UMR 5516, F-42023, Saint-Etienne, France\\
\texttt{julien.bastian@univ-st-etienne.fr}
\AND 
Benjamin Leblanc, \hspace*{5mm} Pascal Germain\\
Département d'informatique et de génie logiciel, Université Laval, Québec, Canada\\
\texttt{benjamin.leblanc.2@ulaval.ca}\hspace*{5mm} \texttt{pascal.germain@ulaval.ca}
\AND 
Amaury Habrard \\
Université Jean Monnet Saint-Étienne, CNRS, Institut d Optique Graduate School,\\
Laboratoire Hubert Curien UMR 5516, Inria, F-42023, Saint-Etienne, France\\
Institut Universitaire de France\\
\texttt{amaury.habrard@univ-st-etienne.fr}
\AND
Guillaume Metzler\\
Université Lumière Lyon 2, Universite Claude Bernard Lyon 1, ERIC, 69007, Lyon, France \\
\texttt{guillaume.metzler@univ-lyon2.fr}
\AND 
Emilie Morvant \\
Université Jean Monnet Saint-Étienne, CNRS, Institut d Optique Graduate School,\\
Laboratoire Hubert Curien UMR 5516, F-42023, Saint-Etienne, France\\
\texttt{emilie.morvant@univ-st-etienne.fr}
\AND 
Paul Viallard\\
Univ Rennes, Inria, CNRS IRISA - UMR 6074, F35000 Rennes, France\\
\texttt{paul.viallard@inria.fr}
}

\begin{document}
\maketitle

\begin{abstract}
\looseness=-1
Weighted majority votes are central to many successful ensemble methods.
PAC-Bayesian theory provides tight generalization guarantees for such models by analyzing the expected risk of stochastic classifiers, while analyzing the risk of deterministic majority votes relies on surrogate bounds. 
To avoid these surrogates, \citet{zantedeschi2021learning} introduced guarantees for stochastic majority votes, but the resulting models remain randomized.
In this paper, we propose a derandomization framework for stochastic majority votes. 
To do so, we apply recent advances in disintegrated PAC-Bayesian theory directly to the space of majority vote weight vectors,  
transforming stochastic guarantees into certificates for a single deterministic majority vote. 
We derive two families of high-probability generalization bounds, covering both data-independent and data-dependent constructions of the ensemble, which naturally lead to a self-bounding learning algorithm optimizing deterministic majority vote guarantees.
\end{abstract}

\section{Introduction}
\looseness=-1
Ensemble methods \citep{dietterich2000ensemble} based on weighted majority votes are among the most classical and widely used approaches in machine learning, especially when dealing with weak base learners (also called voters).
Their empirical success comes from their ability to exploit voter diversity by aggregating them into a single decision rule
\citep[see, \eg,][]{kuncheva2004combining}, such as for Bagging \citep{breiman1996bagging}, random forest \citep{breiman2001random}, boosting \citep{freund1996experiments}.

\looseness=-1
The PAC-Bayesian statistical learning theory \citep{shawetaylor1997pac,mcallester1998pac} provides a principled framework for analyzing generalization in ensemble methods.
The core idea is to consider a posterior (weights) distribution over a set of voters and to provide generalization guarantees for the weighted-average risk of these voters \citep[\eg,][]{mcallester1998pac,seeger2002pac,catoni2007pac}.
A particularly appealing characteristic of PAC-Bayesian bounds is that they can directly lead to learning algorithms.
This has been explored through the notion of self-bounding algorithms of \citet{freund1998self} 
\citep[\eg,][]{langford2003microchoice,ambroladze2006tighter,dziugaite2017computing,germain2009pac,viallard2021self,viallard2024general,atbir2026pac}, where the training objective explicitly minimizes a generalization bound. 
Classical PAC-Bayesian bounds control the weighted-average risk of the voters, which coincides with the expected risk of the stochastic classifier associated with the posterior distribution (often called the Gibbs classifier).
However, since practitioners are usually interested in a single deterministic classifier, an important line of research has focused on transferring PAC-Bayesian bounds from the stochastic classifier to the deterministic weighted majority vote induced by the same posterior distribution \citep[see, \eg,][]{langford2002pac,lacasse2006pac,laviolette2011pac,germain2015risk,masegosa2020second,viallard2021self}.
A common strategy is to upper-bound the vote's risk using surrogate quantities relying on the stochastic classifier's risk.
The simplest example is the ``factor-two'' bound~\citep[stating that the majority vote's risk is upper-bounded by twice the stochastic classifier's risk, see ][]{langford2002pac}, while tighter analyses have been obtained through the binomial bound \citep{shawetaylor2009pac,lacasse2010learning}, the second-order bound~\citep{masegosa2020second} or the C-bound~\citep{breiman2001random,lacasse2006pac,laviolette2011pac,germain2015risk,laviolette2017risk,viallard2021self}.
Even if these approaches have led to PAC-Bayesian self-bounding algorithms~\citep[\eg,][]{masegosa2020second,viallard2021self}, the surrogate usually does not directly account for the risk of the deterministic majority vote, leading to looser generalization guarantees.

\looseness=-1
To tackle this, \citet{zantedeschi2021learning} proposed a stochastic majority vote approach, where the ensemble itself is sampled from a probability distribution. 
This introduces a second level of randomization: instead of sampling a single voter from a posterior distribution, one samples a whole majority vote, that is, an entire set of weights.
In particular, modeling the distribution over weight vectors with a Dirichlet distribution provides a flexible probabilistic model on the simplex (non-negative values summing to one) while preserving tractability. 
Remarkably, in binary classification, the expected $0$-$1$ loss of the resulting stochastic majority vote admits a closed-form expression, enabling the direct optimization of the associated PAC-Bayesian bound. 
This leads to a self-bounding learning algorithm that directly minimizes the bound with tight certificates. 
Despite this advantage, 
the learned predictor remains stochastic: a new majority vote must be sampled at each prediction, leading to variability and making the deployment difficult.

\looseness=-1
In this paper, we extend the stochastic majority vote paradigm of \citet{zantedeschi2021learning} to \textit{disintegrated} PAC-Bayesian bounds, that is, bounds that control the risk of a single model drawn from the posterior distribution \citep{catoni2007pac,blanchard2007occam}. 
More specifically, we build on two recent advances in this line of research \citep{rivasplata2020pac,viallard2024general} to derive bounds for deterministic majority vote predictors that are optimizable.
This allows us to first transform PAC-Bayesian guarantees for a stochastic model into guarantees for a deterministic one,
and then to design corresponding self-bounding learning algorithms that directly optimize these guarantees.
Experimental results on classification tasks show that our approach yields deterministic majority votes together with competitive PAC-Bayesian guarantees.
Compared to stochastic majority votes, our method \textit{(i)} produces deterministic majority votes, \textit{(ii)} requires less training time, and \textit{(iii)} remains competitive in terms of accuracy and certificate tightness. 
Moreover, it substantially improves upon classical PAC-Bayesian majority vote bounds based on surrogate analyses. 
Our work provides a principled bridge between stochastic majority vote learning and deterministic certification.

\textbf{Organization of the paper.}
\Cref{sec:PB} gives basics on classical and disintegrated PAC-Bayes, and \Cref{sec:PBMV} recalls PAC-Bayes analyses of deterministic and stochastic majority vote.
\Cref{sec:contribution} states our contribution, from which we derive a self-bounding algorithm in \Cref{sec:algo}, empirically evaluated in \Cref{sec:expe}.

\section{Basics on PAC-Bayesian Theory}
\label{sec:PB}
\subsection{Classical PAC-Bayesian Bounds}\label{subsec:classic_pb}
\looseness=-1
We focus on 
classification tasks from an input space $\Xcal$ to an output space $ \Ycal$.
We denote by~$\Dcal$ the unknown data-generating distribution over $\Xcal{\times}\Ycal$. 
A learning set $S=\{(x_i,y_i)\}_{i=1}^n$ consists of~$n$ examples drawn \iid from $\Dcal$; we denote by $\Dcal^n$ the distribution of such an $n$-sample.
We consider a finite\footnote{We restrict $\Hcal$ to be finite as required by the majority vote framework we study in this paper; note that this restriction is not needed for classical PAC-Bayesian bounds, which can be formulated for continuous hypothesis spaces.} set of hypotheses/voters $\Hcal=\{h_1,\dots,h_{|\!\Hcal\!|}\}$, 
where each $h\!\in\! \Hcal$ maps $\Xcal$ to $\Ycal$.
Given $\ell:\widehat{\Ycal}{\times}\Ycal\to[0,1]$ a loss function, where $\widehat{\Ycal}$ denotes the prediction space, the \textit{true risk} on $\Dcal$ and \textit{empirical risk} on $S$ of a hypothesis $h\in\Hcal$ respectively are
\begin{align*}
\Risktrue(h) = \Esp_{(x,y)\sim\Dcal}\,\ell(h(x),y),\quad\quad\mbox{and}\quad\quad \Riskemp(h) = \frac{1}{n}\sum_{i=1}^n \ell(h(x_i),y_i).
\end{align*}
Since $\Dcal$ is unknown, statistical learning theory provides tools to estimate how close the empirical risk~$\Riskemp(h)$ is to the true risk~$\Risktrue(h)$.
In particular, Probably Approximately Correct (PAC) learning theory \citep{valiant1984theory}
provides high-probability generalization bounds that relate the true risk to the empirical risk.
Such guarantees are usually expressed through a deviation function
$\gap(\Riskemp(h),\Risktrue(h))$, often referred to as the generalization gap.
For instance, when $\gap(a,b)=|a-b|$, a PAC bound takes the  general form
\begin{align*}
\Prob_{S\sim\Dcal^n}\left[\,\left|\Riskemp(h) - \Risktrue(h) \right| \leq  \complexity(h,n,\delta)\, \right]\geq 1-\delta,\quad \text{for some positive function $\complexity$}.
\end{align*}
Put into words, with high probability over the random draw of the learning set 
$S$ of size~$n$, a hypothesis will exhibit good generalization guarantees when the generalization gap between the true and the empirical risk is low, \ie, the value of $\complexity(h,n,\delta)$ should be as small as possible.

\looseness=-1
We focus on PAC-Bayesian generalization bounds, introduced by \citet{shawetaylor1997pac,mcallester1998pac}.
Unlike classical PAC bounds, which provide guarantees for a fixed hypothesis, PAC-Bayes studies the risk of the stochastic (Gibbs) classifier obtained by sampling, for each data point, a hypothesis from a learned distribution over~$\Hcal$.
Let $\Mcal$ denote the set of probability distributions over its input and $\Mcal^*$ the set of strictly positive distributions. 
PAC-Bayes relies on two distributions over $\Hcal$: a \textit{prior} distribution \mbox{$\prior\!\in\!\Mcal^*(\Hcal)$}, encoding prior knowledge before observing 
$S$, and a learned \textit{posterior} distribution \mbox{$\posterior\!\in\!\Mcal(\Hcal)$}.
The goal is to upper-bound the true risk of the stochastic classifier $\Esp_{h\sim \posterior}\Risktrue(h)$ through the generalization gap $\gap\big(\Esp_{h\sim \posterior}\Riskemp(h), \Esp_{h\sim \posterior}\Risktrue(h)\big)$.
We recall below the general PAC-Bayesian theorem of \citet{germain2009pac}.
\begin{theorem}[\small General PAC-Bayesian Theorem of \citet{germain2009pac}, modified]
\label{theo:PB-general}
\looseness=-1
   For any distribution $\Dcal$, hypothesis set $\Hcal$, prior distribution $\prior\in\Mcal^*(\Hcal)$, measurable function $\varphi:\Hcal\times(\Xcal{\times}\Ycal)^n\to\mathbb{R}$, and $\delta\in (0,1]$, we have 
$$
\Prob_{S\sim \Dcal^n}\left[
\forall \post\in\Mcal(\Hcal),\ \
\Esp_{h\sim\post} \varphi(h,S) \leq \KL(\posterior\|\prior) + \ln\left(\frac1\delta \Esp_{S'\sim\Dcal^n} \Esp_{h'\sim\prior} e^{\varphi(h',S')}\right) 
\right]\geq 1-\delta,
$$
where $\KL(\posterior\|\prior)=\Esp_{h\sim \posterior} \ln \frac{\posterior(h)}{\prior(h)} $ is the Kullback-Leibler divergence between $\posterior$ and $\prior$.
\end{theorem}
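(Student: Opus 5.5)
The plan is the standard change-of-measure argument followed by Markov's inequality. First, fix $S$ and use the Donsker--Varadhan variational formula. For every $\post\in\Mcal(\Hcal)$ and every measurable $\varphi$,
\[
\Esp_{h\sim\post}\varphi(h,S)\le \KL(\post\|\prior)+\ln \Esp_{h\sim\prior} e^{\varphi(h,S)} .
\]
Since $\Hcal$ is finite and $\prior$ is strictly positive, this follows from Jensen's inequality applied to the concave $\ln$:
\[
\Esp_{h\sim\post}\Big[\varphi(h,S)-\ln\tfrac{\post(h)}{\prior(h)}\Big]
=\Esp_{h\sim\post}\ln\Big[\tfrac{\prior(h)}{\post(h)}e^{\varphi(h,S)}\Big]
\le \ln \sum_{h:\post(h)>0}\prior(h)e^{\varphi(h,S)}
\le \ln \Esp_{h\sim\prior}e^{\varphi(h,S)} .
\]
The last step drops only nonnegative terms. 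This inequality is deterministic in $S$ and holds simultaneously for all $\post$. That is why the final statement can put the quantifier ``$\forall\post$'' inside the probability.

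Second, control the random variable $X(S)=\Esp_{h\sim\prior}e^{\varphi(h,S)}$, which is nonnegative and measurable. Markov's inequality gives
\[
\Prob_{S\sim\Dcal^n}\Big[X(S)\le \tfrac1\delta\Esp_{S'\sim\Dcal^n}X(S')\Big]\ge 1-\delta .
\]
Note that $\prior$ does not depend on $S$, so $\Esp_{S'}X(S')=\Esp_{S'\sim\Dcal^n}\Esp_{h'\sim\prior}e^{\varphi(h',S')}$ is exactly the quantity in the statement; Fubini applies because the integrand is nonnegative. On this event, take logarithms, which is valid since $\ln$ is monotone. Combining with the first step yields the claimed bound for all $\post$ at once.

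The only delicate points are technical. The first is the case where the expectation is infinite, in which the bound holds trivially. The second is the case $\post(h)=0$ in the change of measure, handled by the convention $0\ln 0=0$ and by restricting the sum to the support of $\post$. I expect the change-of-measure step to be the main obstacle, although with finite $\Hcal$ it is routine.
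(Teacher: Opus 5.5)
Your proof is correct and follows the standard argument for this result. The paper does not reprove it but defers to \citet{germain2009pac}, whose proof uses exactly your two ingredients: Markov's inequality applied to the nonnegative variable $\Esp_{h\sim\prior}e^{\varphi(h,S)}$, and the Donsker--Varadhan change of measure, which for the finite $\Hcal$ assumed in the paper reduces to your Jensen computation.
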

\looseness=-1
\Cref{theo:PB-general} holds for any posterior $\post\!\in\!\Mcal(\Hcal)$ and is penalized by the KL divergence between $\posterior$ and $\prior$: the closer $\posterior$ is to $\prior$, the tighter the resulting guarantee.
Importantly, from this generic result, one can recover many classical PAC-Bayes bounds through an appropriate choice of $\varphi$ \citep[\eg,][]{mcallester1998pac,seeger2002pac,catoni2007pac,maurer2004note}.
For example, choosing $\varphi(h,S)\!\triangleq\!n\gap(\Riskemp(h),\Risktrue(h))$ (where $\gap$ is convex) leads to bounds controlling the generalization gap between empirical and true risks.
%
We now recall its instantiation 
with $\varphi(h,S) \!\triangleq\! n\kl(\Riskemp(h)\|\Risktrue(h))$, where $\kl(p\|q)\!
\triangleq\! p \ln\frac{p}{q}
+ (1-p)\ln\frac{1-p}{1-q}$ is
the KL divergence between two Bernoulli distributions with parameters
$p$ and~$q$.
\begin{theorem}[\small \citet{seeger2002pac,maurer2004note}]
\label{thm:classical_kl}
   For any distribution $\Dcal$, hypothesis set $\Hcal$, prior distribution~\mbox{$\prior\!\in\!\Mcal^*(\Hcal)$}, and $\delta\in (0,1]$, we have
$$
\Prob_{S\sim \Dcal^n}\left[
\forall \posterior\in\Mcal(\Hcal),\ \
\kl\left(\Esp_{h \sim \posterior}\Riskemp(h)\Big\|\Esp_{h \sim \posterior}\Risktrue(h)\right)\leq \frac{1}{n}\left[\KL(\posterior\|\prior)+\ln\left(\frac{2\sqrt{n}}{\delta}\right)\right]\,
\right]\geq 1-\delta.
$$
\end{theorem}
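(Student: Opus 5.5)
We instantiate \Cref{theo:PB-general} with $\varphi(h,S)\triangleq n\,\kl(\Riskemp(h)\|\Risktrue(h))$. Two things are then needed: move the expectation over $\post$ inside the $\kl$ on the left-hand side, and bound the exponential moment $\Esp_{S'}\Esp_{h'\sim\prior} e^{n\kl(\widehat R_{S'}(h')\|\Risktrue(h'))}$ by $2\sqrt n$ on the right-hand side.

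\textbf{Left-hand side.} The function $(p,q)\mapsto\kl(p\|q)$ is jointly convex on $[0,1]^2$, since it is a perspective-type function (an $f$-divergence between Bernoulli distributions). Jensen's inequality therefore gives
\[
n\,\kl\Big(\Esp_{h\sim\post}\Riskemp(h)\Big\|\Esp_{h\sim\post}\Risktrue(h)\Big)\le \Esp_{h\sim\post} n\,\kl(\Riskemp(h)\|\Risktrue(h)),
\]
and this holds simultaneously for all $\post$. On the event of \Cref{theo:PB-general}, it remains to divide by $n$.

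\textbf{Right-hand side.} The prior $\prior$ does not depend on $S'$, so by Fubini we can swap the two expectations. It then suffices to show, for each fixed $h'$, that $\Esp_{S'\sim\Dcal^n} e^{n\kl(\widehat R_{S'}(h')\|\Risktrue(h'))}\le 2\sqrt n$. This is Maurer's lemma, and it is the main obstacle.

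\emph{Reduction to the binomial case.} Fix $h'$ and write $q=\Risktrue(h')$. The empirical risk is an average of $n$ i.i.d.\ $[0,1]$-valued losses with mean $q$. The map $x\mapsto e^{n\kl(\bar x\|q)}$, with $\bar x$ the mean of $x\in[0,1]^n$, is convex in $x$. Maurer's convexity argument then shows its expectation is maximised when each loss is Bernoulli$(q)$, i.e.\ when the losses are placed on the endpoints $\{0,1\}$. This step is not needed for the $0$-$1$ loss, which is already Bernoulli.

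\emph{Binomial computation.} With $k\sim\mathrm{Bin}(n,q)$,
\[
\Esp\, e^{n\kl(k/n\|q)}=\sum_{k=0}^n \binom nk\Big(\frac kn\Big)^k\Big(1-\frac kn\Big)^{n-k}.
\]
The factors $q^k(1-q)^{n-k}$ cancel, so this sum does not depend on $q$.

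\emph{Bounding the sum.} Apply Stirling-type bounds to $\binom nk$ (for $1\le k\le n-1$):
\[
\binom nk\Big(\frac kn\Big)^k\Big(1-\frac kn\Big)^{n-k}\lesssim\sqrt{\frac{n}{2\pi k(n-k)}}.
\]
The two endpoint terms $k=0$ and $k=n$ each contribute $1$. Comparing the remaining sum with $\int_0^n \frac{dx}{\sqrt{x(n-x)}}=\pi$ gives roughly $\sqrt{n/(2\pi)}\cdot\pi=\sqrt{\pi n/2}$. This is at most $2\sqrt n$ once the endpoints are absorbed, for $n\ge 8$, as Maurer shows. Small $n$ are checked directly.

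\textbf{Conclusion.} Plugging the bound $2\sqrt n$ into \Cref{theo:PB-general} gives $\ln\big(\frac1\delta\,2\sqrt n\big)=\ln\frac{2\sqrt n}{\delta}$. This yields exactly the stated bound, with probability at least $1-\delta$, uniformly over $\post$.
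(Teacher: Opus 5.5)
Your proposal is correct and follows the paper's route. The paper only recalls this result as the instantiation of \Cref{theo:PB-general} with $\varphi(h,S)=n\,\kl(\Riskemp(h)\|\Risktrue(h))$, relying on the joint convexity of $\kl$ and on Maurer's bound $\Esp_{S'}\Esp_{h'\sim\prior}e^{n\kl(\widehat R_{S'}(h')\|\Risktrue(h'))}\le 2\sqrt n$, the same ingredient it invokes in the proof of \Cref{thm:rivasplata_smv}; your Stirling-plus-integral sketch fills in that cited step, and made rigorous with Robbins' bounds and the convexity of $x\mapsto 1/\sqrt{x(n-x)}$ it gives exactly $2+\sqrt{\pi n/2}\le 2\sqrt n$ for $n\ge 8$.
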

\looseness=-1
PAC-Bayesian bounds are particularly appealing since \textit{(i)} their data-dependent nature generally leads to tight and computable generalization bounds, and \textit{(ii)} their ability to lead to 
 algorithms that directly minimize the bound gives rise to \textit{self-bounding} algorithms \citep{freund1998self}.

\subsection{Disintegrated PAC-Bayesian Bounds}
\label{sec:disintegration}
\looseness=-1
A limitation of classical PAC-Bayesian bounds is that they provide guarantees in expectation with respect to the posterior distribution.
Consequently, they certify the performance of a stochastic classifier rather than that of a single deterministic one, the latter being often required in practice.
For instance, in medical diagnosis \citep{naji2021machine}, a patient must receive a single and reproducible diagnosis: randomly changing decisions would be undesirable.
In response to such a lack in classical PAC-Bayesian bounds, another line of research known as \textit{disintegrated} PAC-Bayes bounds was initiated by \citet{catoni2007pac,blanchard2007occam}. 
Instead of bounding the risk of a stochastic classifier with respect to the posterior distribution, disintegrated PAC-Bayesian bounds upper-bound the risk of a single classifier sampled from a posterior distribution $\posterior_S$ learned by an algorithm $A:(\Xcal{\times}\Ycal)^n\times\Mcal^*(\Hcal)\to\Mcal(\Hcal)$ involving both a dataset and a prior distribution. 
We recall below the general disintegrated PAC-Bayesian bound proposed by \citet{rivasplata2020pac} that is a derandomization of the general PAC-Bayesian \Cref{theo:PB-general}.
\begin{restatable}[\small General Disintegrated PAC-Bayesian Theorem of~\citet{rivasplata2020pac}]{theorem}{GeneralDisintegratedRivasplate}
\label{thm:rivasplata}
\looseness=-1
For any distribution~$\Dcal$, hypothesis set~$\Hcal$, prior distribution $\prior \!\in\! \Mcal^*(\Hcal)$, measurable function \mbox{$\varphi: \Hcal {\times}(\Xcal{\times}\Ycal)^n \rightarrow \mathbb{R}$}, algorithm $A:(\Xcal{\times}\Ycal)^n \times \Mcal^*(\Hcal) \rightarrow \Mcal(\Hcal)$, and $\delta \in(0,1]$, we have
\begin{align*}
\underset{\substack{S \sim \Dcal^n\\h \sim \post_{S}}}{\mathbb{P}}\!
\left[\varphi(h, S) \leq \ln \left[\frac{\post_{S}(h)}{\prior(h)}\right]+\ln \left[\frac{1}{\delta} \underset{S^{\prime} \sim \Dcal^n}{\E} \underset{h^{\prime} \sim \prior}{\E} \exp \left(\varphi\left(h^{\prime}, S^{\prime}\right)\right)\right]\right] \geq 1\!-\!\delta, \quad \text{with}\quad  \post_{S} \triangleq A(S, \prior).
\end{align*}
\end{restatable}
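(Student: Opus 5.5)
The argument is Markov's inequality applied to a nonnegative random variable defined jointly on $(S,h)$, followed by a change of measure from $\post_S$ to $\prior$. Let $\post_S=A(S,\prior)$ and define
\[
X(S,h) \triangleq \frac{\prior(h)}{\post_S(h)}\exp\big(\varphi(h,S)\big),
\]
with the convention that $X$ is only evaluated where $\post_S(h)>0$, which holds almost surely under $h\sim\post_S$. Since $\Hcal$ is finite and $\prior\in\Mcal^*(\Hcal)$, the ratio $\prior(h)/\post_S(h)$ is well defined and positive on the support of $\post_S$. Hence $X$ is a strictly positive random variable under the joint law $S\sim\Dcal^n$, $h\sim\post_S$.

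The first step is to compute its expectation. For a fixed $S$,
\[
\Esp_{h\sim\post_S} X(S,h) = \sum_{h:\post_S(h)>0}\prior(h)\,e^{\varphi(h,S)} \le \Esp_{h\sim\prior} e^{\varphi(h,S)} .
\]
This is the change of measure. The inequality (rather than equality) comes from dropping the $h$ with $\post_S(h)=0$, whose terms are nonnegative. Taking the expectation over $S\sim\Dcal^n$ and renaming the variables $S'$, $h'$ gives
\[
\Esp_{S}\Esp_{h\sim\post_S} X \le \Esp_{S'\sim\Dcal^n}\Esp_{h'\sim\prior}e^{\varphi(h',S')} \triangleq M .
\]

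The second step applies Markov's inequality to $X$ under the joint distribution: $\Prob[X> M/\delta]\le \delta\,\Esp[X]/M\le\delta$. If $M=\infty$ the statement is trivial, and $M>0$ since $e^{\varphi}>0$. Thus, with probability at least $1-\delta$ over $S\sim\Dcal^n$ and $h\sim\post_S$, we have $X(S,h)\le M/\delta$. Taking logarithms, which is legitimate because both sides are positive, this reads
\[
\varphi(h,S)+\ln\frac{\prior(h)}{\post_S(h)}\le \ln\frac{M}{\delta}.
\]
Rearranging yields exactly the claimed inequality.

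The only delicate point is measurability and the joint-probability interpretation. One needs $(S,h)\mapsto \post_S(h)$ to be measurable so that the law "$S\sim\Dcal^n$, $h\sim\post_S$" is a genuine joint distribution and Fubini can be applied to write the joint expectation as iterated. I would assume this as part of the requirement that $A$ be a (measurable) Markov kernel. With a finite $\Hcal$, this reduces to each $S\mapsto\post_S(h)$ being measurable, and that is implicit in the statement.
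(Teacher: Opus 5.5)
Your proof is correct and follows essentially the same route as the cited source: the paper does not reprove this theorem but takes it from \citet{rivasplata2020pac}, whose argument is the same one you give, namely Markov's inequality under the joint law of $(S,h)$ applied to $e^{\varphi(h,S)}\prior(h)/\post_S(h)$, combined with the change of measure $\Esp_{h\sim\post_S}\big[e^{\varphi(h,S)}\prior(h)/\post_S(h)\big]\le\Esp_{h\sim\prior}e^{\varphi(h,S)}$. One remark: your write-up leans on $\Hcal$ being finite, but the paper later applies the theorem with $\Hcal\triangleq\Wcal$ (the simplex with Dirichlet densities); there your sums become integrals of densities against a common dominating measure, and the argument is otherwise unchanged.
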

\looseness=-1
Compared to Theorem~\ref{theo:PB-general}, the expectation over the posterior 
is moved outside the probability statement.
Then, the bound holds with high probability for a single model $h$ sampled from the learned posterior~$\posterior_S$.
Moreover, the KL divergence is replaced by its ``disintegrated'' version that only involves the density ratio between $\posterior_S(h)$ and~$\prior(h)$.
More recently, \citet{viallard2024general} proposed the following alternative theorem, where the penalty is expressed through the Rényi divergence between $\posterior_S$ and $\prior$, and thus depends on the entire $\Hcal$.
\begin{restatable}[\small General Disintegrated PAC-Bayesian Theorem of~\citet{viallard2024general}]{theorem}{GeneralDisintegratedViallard}
\label{thm:viallard}
\looseness=-1
Under the same assumptions as \Cref{thm:rivasplata}, assuming additionally that \mbox{$\varphi:\Hcal\times(\Xcal{\times}\Ycal)^n\rightarrow\mathbb{R}_+^*$}
and $\lambda>1$, we have
\begin{align*}
\underset{\substack{S \sim \Dcal^n\\h \sim \post_S}}{\mathbb{P}}
\left[\, \frac{\lambda}{\lambda\!-\!1} \ln (\varphi(h, S))\leq \frac{2 \lambda\!-\!1}{\lambda\!-\!1} \ln \frac{2}{\delta}+\DivRen\!\left(\post_S \| \prior\right)+\ln \left[\underset{S^{\prime} \sim \Dcal^n}{\E} \underset{h^{\prime} \sim \prior}{\E}\left(\varphi\left(h^{\prime}, S^{\prime}\right)^{\frac{\lambda}{\lambda\!-\!1}}\right)\right]\,\right] \geq 1-\delta,
\end{align*}
where $\post_{S} \triangleq A(S, \prior)$, and $\DivRen\left(\post_S \| \prior\right)\triangleq {\displaystyle \frac{1}{\lambda\!-\!1}} \ln \left[\Esp_{h\sim\prior} \left[\frac{\post_S(h)}{\prior(h)}\right]^\lambda\right]$ is  the Rényi divergence. 
\end{restatable}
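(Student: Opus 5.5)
The plan is to reproduce the argument of \citet{viallard2024general}: apply Markov's inequality twice and use Hölder's inequality to change the measure from $\post_S$ to $\prior$. Fix the algorithm $A$ and write $\post_S = A(S,\prior)$. Since $\Hcal$ is finite and $\prior\in\Mcal^*(\Hcal)$, the ratio $\post_S(h)/\prior(h)$ is well defined. Since $\varphi>0$, all logarithms below are finite.

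\textbf{Step 1 (Markov over $h$).} For a fixed $S$, the variable $\varphi(h,S)$ is positive. Markov's inequality with confidence $\delta/2$ gives
\begin{align*}
\Prob_{h\sim\post_S}\Big[\varphi(h,S)\le \tfrac{2}{\delta}\Esp_{h'\sim\post_S}\varphi(h',S)\Big]\ge 1-\tfrac{\delta}{2}.
\end{align*}

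\textbf{Step 2 (change of measure by Hölder).} With conjugate exponents $\lambda$ and $\lambda' = \lambda/(\lambda-1)$, write $\Esp_{\post_S}\varphi = \Esp_{\prior}\big[\tfrac{\post_S}{\prior}\varphi\big]$. Hölder's inequality then gives
\begin{align*}
\Esp_{h'\sim\post_S}\varphi(h',S)\le \Big(\Esp_{\prior}\big[\tfrac{\post_S}{\prior}\big]^{\lambda}\Big)^{1/\lambda}\Big(\Esp_{\prior}\varphi^{\lambda'}\Big)^{1/\lambda'} = \exp\!\Big(\tfrac{\lambda-1}{\lambda}\DivRen(\post_S\|\prior)\Big)\Big(\Esp_{\prior}\varphi(h',S)^{\lambda'}\Big)^{1/\lambda'}.
\end{align*}

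\textbf{Step 3 (Markov over $S$).} The variable $\Esp_{h'\sim\prior}\varphi(h',S)^{\lambda'}$ is nonnegative and does not depend on $\post_S$, because $\prior$ is data-free. Markov's inequality with confidence $\delta/2$ gives, with probability at least $1-\delta/2$ over $S$,
\begin{align*}
\Esp_{\prior}\varphi(h',S)^{\lambda'}\le \tfrac{2}{\delta}\Esp_{S'}\Esp_{\prior}\varphi(h',S')^{\lambda'}.
\end{align*}

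\textbf{Step 4 (combine).} A union bound over the two events yields joint probability at least $1-\delta$ over $(S,h)$. On this event, take logarithms and multiply by $\lambda'$. This gives
\begin{align*}
\lambda'\ln\varphi(h,S)\le \lambda'\ln\tfrac{2}{\delta}+\DivRen(\post_S\|\prior)+\ln\tfrac{2}{\delta}+\ln\Esp_{S'}\Esp_{\prior}\varphi(h',S')^{\lambda'}.
\end{align*}
The two confidence terms sum to
\begin{align*}
\Big(\tfrac{\lambda}{\lambda-1}+1\Big)\ln\tfrac{2}{\delta}=\tfrac{2\lambda-1}{\lambda-1}\ln\tfrac{2}{\delta},
\end{align*}
which is exactly the stated constant.

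\textbf{Expected obstacle.} The main delicate point is the order of quantifiers in Step 4. The event from Step 1 is conditional on $S$. To integrate it over $S\sim\Dcal^n$, its failure probability $\delta/2$ must hold uniformly for each $S$, which it does. I would write the joint failure probability as $\Esp_S\Prob_{h\sim\post_S}[\cdot]$ plus $\Prob_S[\cdot]$ to make this explicit.

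A second point to check is measurability of $S\mapsto\post_S$. I would take it as implicit in the hypothesis on the algorithm $A$, which is automatic for finite $\Hcal$ once $A$ is measurable.
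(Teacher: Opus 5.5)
Your proof is correct and matches the standard argument of \citet{viallard2024general}; the paper itself only recalls this theorem and does not reprove it. The steps are the same: Markov's inequality over $h\sim\post_S$ at level $\delta/2$, Hölder's inequality with exponents $(\lambda,\frac{\lambda}{\lambda-1})$ to change measure from $\post_S$ to $\prior$, Markov's inequality over $S$ at level $\delta/2$, and a union bound, which gives exactly $\frac{2\lambda-1}{\lambda-1}\ln\frac{2}{\delta}$. Nothing in the argument actually needs $\Hcal$ to be finite: reading $\post_S(h)/\prior(h)$ as a Radon--Nikodym derivative (the bound is trivial when $\post_S\not\ll\prior$) covers the continuous weight space $\Wcal$ to which the paper later applies the theorem.
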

\looseness=-1
These results motivate a learning procedure:
\textit{(i)} learn a posterior $\post_S$ from $S$ using an algorithm $A$;
\textit{(ii)} sample a single model $h\!\sim\!\posterior_S$ for deployment; 
and
\textit{(iii)} obtain a high-probability generalization guarantee for this specific deterministic $h$.
In the next section, we recall another common approach in PAC-Bayes to get bounds to study the risk of the deterministic majority vote induced by the posterior distribution $\posterior$. 

\section{PAC-Bayes for Majority Vote}
\label{sec:PBMV}

\subsection{The Deterministic Weighted Majority Vote}
\label{sec:detMV}
\looseness=-1
In practice, the stochastic Gibbs classifier associated with a posterior $\post$ is rarely used directly at prediction time.
Instead, one can aggregate the predictions of the voters in $\Hcal$ according to their posterior weights, yielding a deterministic \textit{weighted majority vote}, denoted by $\MV_{\posterior}$. 
Given a posterior distribution $\post$ over $\Hcal$, the weighted majority vote predicts the label with the largest total posterior weight, it is defined by
\begin{align}
    \label{eq:mv}
    \MV_{\posterior}(x) =  \argmax_{y\in\Ycal}     \left\{ \Esp_{h\sim \post}\,\I [h(x) = y] \right\},
\end{align}
where $\I [a]\!=\!1$ if $a$ is true and $0$ otherwise. 
Its \textit{true risk} and \textit{empirical risk} are, respectively,
\begin{align}
\label{eq:mv-risks}
    \Risktrue (\MV_{\post}) = \Esp_{(x,y)\sim\Dcal} \ell \left(\MV_{\post}(x),y\right),\quad \mbox{and}\quad \Riskemp (\MV_{\post}) = \frac{1}{n}\sum_{i=1}^n \ell \left(\MV_{\post}(x_i),y_i\right).
\end{align}
\looseness=-1
Majority votes play a central role in PAC-Bayes, as they often achieve good empirical performance while still benefiting from PAC-Bayesian guarantees through their connection with the stochastic Gibbs classifier.
However, the challenge is that the majority vote involves a non-differentiable aggregation of the voters, making its risk hard to analyze directly.
For this reason, a part of the literature has focused on the specialization of PAC-Bayes bounds to majority vote.
One of the most classical approaches consists in upper-bounding the risk of the majority vote with surrogate quantities that depend on the risk of the stochastic classifier~\citep[\eg,][]{langford2002pac,shawetaylor2009pac,lacasse2010learning,lacasse2006pac,masegosa2020second}.
As a consequence, most PAC-Bayesian analyses do not directly control the generalization gap for the risk of the majority vote itself.
Instead, they control surrogate quantities from which guarantees on the majority vote can be derived with varying degrees of tightness.
We recall below the classical surrogates instantiated with the $0$-$1$ loss,
defined by $\ell(h(x),y) = \I[h(x)\neq y]$.

\looseness=-1
\textbf{Factor-two bound}  \citep[or first-order bound,
][]{langford2002pac}\textbf{.}
Using the first-order Markov inequality, the risk of the majority vote satisfies
\begin{align}
\label{eq:factor-two}
\Risktrue(\MV_{\post}) \leq 2 \,\Esp_{h\sim \post}\Risktrue(h).
\end{align} 
This inequality relates the deterministic majority vote risk to the risk $\E_{h\sim\post}\Risktrue(h)$.
Despite its simplicity, this surrogate lacks precision, since \textit{(i)} when~$\E_{h\sim \post}\Risktrue(h)$ exceeds $\nicefrac{1}{2}$, the bound exceeds $1$ and becomes uninformative, and \textit{(ii)} the surrogate is close to $0$ only if the risk of the stochastic classifier is close to $0$ too.
In particular, it does not exploit the diversity or error correlations among voters in $\Hcal$, which is key when learning with a majority vote: if all the voters were always right, then 
one does not need to combine them. 

\looseness=-1
\textbf{Binomial bound} \citep{shawetaylor2009pac,lacasse2010learning}\textbf{.}
It estimates the majority vote risk by sampling $N$ base voters and computing the probability that at least half of them make an error:
\begin{equation*}
 b_{\Dcal}^N(\post)\triangleq 
\Esp_{(x,y)\sim\Dcal}\left[\,
\sum_{j=\lceil\frac{N}{2}\rceil}^N\binom{N}{j} \wrongweight(x,y)^j \left(1-\wrongweight(x,y)\right)^{(N-j)}\right],   
\end{equation*}
with $\wrongweight(x,y)\triangleq \Esp_{h\sim\posterior} \I[h(x)\!\ne\! y]$ the posterior weight assigned to misclassifying voters.
As $N$ increases, $b_{\Dcal}^N$ provides a tighter approximation of the majority vote risk, but the corresponding PAC-Bayes bound becomes looser since its penalty term scales linearly with $N$.
The resulting surrogate relies on a factor-two inequality: 
\begin{align}
    \label{eq:bin-bound}
    \Risktrue(\MV_{\post}) \leq 2\,  b_\Dcal^N(\posterior).
\end{align}

\looseness=-1
\textbf{Second-order bound} \citep{masegosa2020second}\textbf{.}
One solution to account for the diversity of voters with slightly greater precision is to use the joint error defined by
\begin{align*}
 \jointerr_{\Dcal}(\post) = \Esp_{(x,y)\sim \Dcal}\Esp_{(h,h')\sim\post^2} \I[h(x)\ne y]\,\I[h'(x)\ne y].
\end{align*}
More precisely, by applying second-order Markov's inequality, we have 
\begin{align}
 \label{eq:second-order-bound}
 \Risktrue(\MV_{\post}) \leq 4\,\jointerr_{\Dcal}(\post).
\end{align}
This surrogate reflects the idea that, for a majority vote to perform well, voters must be sufficiently diverse, and if they make mistakes, those mistakes must occur for different predictions.
However, if the joint error $\jointerr_{\Dcal}(\post)$ exceeds $\nicefrac{1}{4}$, the surrogate exceeds $1$ and is uninformative.

\looseness=-1
\textbf{C-bound}\footnote{The term ``C-bound'' is specific to PAC-Bayes.} \citep{lacasse2006pac}\textbf{.}
This bound involves both the joint error and the disagreement between voters, allowing the diversity/complementarity of voters to be taken into account explicitly.
It follows from the Cantelli-Chebyshev inequality and is expressed as
\begin{align*}
\mbox{if}\ \Esp_{h\sim\post} \Risktrue(h) \leq \frac{1}{2}\quad \mbox{we have}\quad  
\Risktrue(\MV_{\post}) \le 1 - \frac{\big(1-(2\jointerr_{\Dcal}(\post)+\dis_{\Dcal}(\post))\big)^2}{1-2\dis_{\Dcal}(\post)} \triangleq \cbound_{\Dcal}(\post),
\end{align*}
where $\displaystyle \dis_{\Dcal}(\post) = \Esp_{(x,y)\sim\Dcal} \Esp_{(h,h')\sim\post^2} \I[h(x)\ne h'(x)]$
is the disagreement.

\looseness=-1
\textbf{Relationships between the surrogates.}
For any distribution $\Dcal$, for any voter set $\Hcal$, for any distribution~$\posterior$ on $\Hcal$, if $\Esp_{h\sim\posterior} \Risktrue(h) < \frac{1}{2}$, we have
\begin{align*}
&\textit{(i)}~~\Risktrue(\MV_{\post})\  \le \ \cbound_{\Dcal}(\post)\ \le\ 4\jointerr_{\Dcal}(\post)\ \le \ 2\Esp_{h\sim\post} \Risktrue(h), \quad \text{if  }\Esp_{h\sim\post} \Risktrue(h) \le  \dis_{\Dcal}(\post),\\
&\textit{(ii)}~\Risktrue(\MV_{\post})\ \le\ 2 \Esp_{h\sim\post} \Risktrue(h)\ \le\ \cbound_{\Dcal}(\post)\ \le\ 4\jointerr_{\Dcal}(\post),\quad \text{otherwise}.
 \end{align*}
 Interestingly, it suggests that if the voters are sufficiently diverse, \ie, 
 if $\Esp_{h\sim\post} \Risktrue(h) \le  \dis_{\Dcal}(\post)$, 
one should consider the C-bound as a surrogate to minimize.

\looseness=-1
Importantly, these approaches do not necessarily lead to tight guarantees for the majority vote, as they rely on surrogate quantities. 
Recently, \citet{zantedeschi2021learning} proposed a different perspective allowing PAC-Bayes to reason directly on the risk of majority votes.

\subsection{The PAC-Bayesian Stochastic Majority Vote} 
\label{sec:zantedeschi}
\looseness=-1
The idea of \citet{zantedeschi2021learning} is to shift the PAC-Bayesian randomization from the voter space $\Hcal$ to the space of majority votes itself.
Instead of defining a posterior distribution over individual voters $h \!\in\! \Hcal$, they consider a posterior over the weight vectors that define weighted majority votes, \ie, over the weights induced by $\post$.
Then, the stochasticity no longer comes from sampling a voter but from sampling a majority vote.
Concretely, let \mbox{$\Wcal\subseteq\Rbb^{|\!\Hcal\!|}$} denote the set of admissible weight vectors.
For any $\brho\in\Wcal$, a weighted majority vote of \Cref{eq:mv} is 
\begin{align}
\label{eq:mv-weight}
    \MV_{\brho}(x) =  \argmax_{y\in\Ycal}
    \left\{ \sum_{h\in\Hcal} \rho_h\,\I [h(x) = y] \right\}.
\end{align}
\looseness=-1
Then, they consider a \textit{hyper-prior} distribution $\hyperprior\in\Mcal^*(\Wcal)$ and a \textit{hyper-posterior} distribution $\hyperposterior\in\Mcal(\Wcal)$ defined on the weight space $\Wcal$.
Sampling $\brho\sim\hyperposterior$ amounts to sampling one deterministic majority vote~$\MV_{\brho}$.
Therefore, the stochastic majority vote is the randomized predictor obtained by drawing such a weight vector from the hyper-posterior before applying the corresponding majority vote.
Its true risk is given by
$\Esp_{\brho\sim\hyperposterior}\Risktrue(\MV_{\brho})$.
Under this framework, they derived the following PAC-Bayesian generalization bound.
\begin{theorem}[\small PAC-Bayesian generalization bound for the stochastic majority vote of \citet{zantedeschi2021learning}]\label{theo:zantedeschi}
For any distribution $\Dcal$, finite hypothesis set $\Hcal$, hyper-prior distribution $\hyperprior\in\Mcal^*(\Wcal)$ and $\delta\in(0,1]$, we have
\begin{align*}  
\Prob_{S\sim\Dcal^n}\!\left[
\forall \hyperposterior\in\Mcal(\Wcal), 
\Esp_{\brho \sim \hyperposterior} \Risktrue(\MV_{\brho})
 \leq 
\kl^{-1} \!\left(
\Esp_{\brho \sim \hyperposterior} \Riskemp(\MV_{\brho})
\bigg\|
\frac{\KL(\hyperposterior\|\hyperprior) + \ln \frac{2\sqrt{n}}{\delta}}{n}
\right)
\right]\geq 1\!-\!\delta,
\end{align*}
where $\kl^{-1}(q\|\epsilon) = \max\left\{ p \in [0,1] \ |\  \kl(q,p) \leq \epsilon \right\}$ is the inverse of the binary KL divergence.
\end{theorem}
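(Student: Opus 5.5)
The plan is to treat this as a direct instantiation of \Cref{thm:classical_kl}, viewed as an instance of \Cref{theo:PB-general}, with the ``hypothesis space'' taken to be the weight space $\Wcal$ rather than $\Hcal$. Each $\brho\in\Wcal$ defines a deterministic classifier $\MV_{\brho}$ through \Cref{eq:mv-weight}. Its loss $\ell(\MV_{\brho}(x),y)=\I[\MV_{\brho}(x)\neq y]$ lies in $[0,1]$. So $\Risktrue(\MV_{\brho})$ and $\Riskemp(\MV_{\brho})$ are exactly the true and empirical risks of a $[0,1]$-bounded loss on the ``hypothesis'' $\brho$. The hyper-prior $\hyperprior$ and hyper-posterior $\hyperposterior$ then play the roles of $\prior$ and $\post$.

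First I would apply \Cref{theo:PB-general} on $\Wcal$ with $\varphi(\brho,S)\triangleq n\,\kl(\Riskemp(\MV_{\brho})\|\Risktrue(\MV_{\brho}))$. This gives, with probability at least $1-\delta$ and simultaneously for all $\hyperposterior$,
\[
n\Esp_{\brho\sim\hyperposterior}\kl(\Riskemp(\MV_{\brho})\|\Risktrue(\MV_{\brho}))\le \KL(\hyperposterior\|\hyperprior)+\ln\Bigl(\tfrac1\delta\Esp_{S'}\Esp_{\brho'\sim\hyperprior}e^{\varphi(\brho',S')}\Bigr).
\]
Second, I would swap the expectations by Fubini, which is legitimate because $\hyperprior$ does not depend on $S'$. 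For each fixed $\brho'$, $n\Riskemp(\MV_{\brho'})$ is a sum of $n$ \iid $[0,1]$-valued variables with mean $\Risktrue(\MV_{\brho'})$, so Maurer's lemma gives $\Esp_{S'}e^{n\kl(\Riskemp\|\Risktrue)}\le 2\sqrt n$. This is exactly the step used in \Cref{thm:classical_kl}. Third, joint convexity of $\kl$ and Jensen give $\kl(\Esp_{\hyperposterior}\Riskemp(\MV_{\brho})\|\Esp_{\hyperposterior}\Risktrue(\MV_{\brho}))\le\Esp_{\hyperposterior}\kl(\cdot\|\cdot)$. Finally, the definition of $\kl^{-1}$ converts $\kl(q\|p)\le\epsilon$ into $p\le\kl^{-1}(q\|\epsilon)$, which yields the stated inequality.

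The main technical point is measurability and generality. $\Wcal$ is a continuous subset of $\Rbb^{|\Hcal|}$, such as the simplex, rather than a finite set. So $\KL$ and the densities must be understood with respect to a dominating measure, and I need $\brho\mapsto\Riskemp(\MV_{\brho})$ to be measurable. That holds since the argmax in \Cref{eq:mv-weight} is piecewise constant on polyhedral regions, with ties broken by a fixed rule. The change-of-measure inequality behind \Cref{theo:PB-general} holds for general measurable spaces, with $\KL=\infty$ when $\hyperposterior\not\ll\hyperprior$, in which case the bound is trivial. Once these points are checked, the proof is essentially \Cref{thm:classical_kl} applied verbatim on $\Wcal$.
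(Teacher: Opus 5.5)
Your proposal is correct and follows the intended argument. The paper does not reprove this result but imports it from \citet{zantedeschi2021learning}, where it is \Cref{thm:classical_kl} applied with $\Wcal$ in place of $\Hcal$: \Cref{theo:PB-general} with $\varphi=n\,\kl$, Maurer's $2\sqrt{n}$ moment bound, Jensen via joint convexity of $\kl$, and inversion, the same template the paper uses for \Cref{thm:rivasplata_smv,thm:viallard_smv}. Your extra care about measurability of $\brho\mapsto\Riskemp(\MV_{\brho})$ and about the change of measure on the continuous space $\Wcal$ is appropriate, because the paper states \Cref{thm:classical_kl} only for finite $\Hcal$.
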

\looseness=-1
\Cref{theo:zantedeschi} is appealing since it is expressed directly in terms of majority votes, without relying on a surrogate quantity as in \Cref{sec:detMV}. 
However, the empirical risk $\Esp_{\brho \sim \hyperposterior} \Riskemp(\MV_{\brho})$ is not straightforward to compute or optimize, since it involves an expectation over majority votes sampled from~$Q$.
To make the problem tractable, \citet{zantedeschi2021learning} consider the set $\Wcal\triangleq\{\brho\in[0,1]^{|\!\Hcal\!|}\;\big|\;\|\brho\|_1 = 1\}$ and model distributions over $\Wcal$ with Dirichlet distributions.
Specifically, a weight vector is sampled as  $\brho\sim\Dir(\balpha)$ with $\balpha\in\R_{>0}^{|\!\Hcal\!|}$.
In binary classification, this choice is particularly interesting since the expected $0$-$1$ loss admits a closed-form expression.
As a consequence, the empirical risk and the penalty can be combined into an explicit objective over the Dirichlet parameters.
The hyper-posterior $Q$ can then be learned by directly minimizing the resulting PAC-Bayesian bound, leading to a self-bounding learning algorithm for stochastic majority votes.

Nevertheless, the certified predictor remains stochastic, since the guarantee controls the average risk of majority votes drawn from $Q$.
This implies a drawback that has a consequence not only for computing the bound, but also at prediction time: for each data point, one has to sample a weight vector~$\brho\sim\hyperposterior$ and then use the induced majority vote $\MV_{\brho}$.
In the following, we show how to derandomize the stochastic majority vote by applying disintegrated PAC-Bayesian theory directly on the weight space $\Wcal$.

\section{Disintegrated stochastic majority vote}
\label{sec:theorems}
\label{sec:contribution}
\looseness=-1
We now present our contributions in this section. 
To obtain a certified deterministic majority vote, we leverage the recent advances in disintegrated PAC-Bayesian theory (\Cref{sec:disintegration}).
Since each weight vector $\brho\in\Wcal$ defines a deterministic majority vote~$\MV_{\brho}$, we apply the disintegration directly to the hyper-posterior over majority vote weights.
Consequently, if a learning algorithm outputs a hyper-posterior $\hyperposterior_S\in\Mcal(\Wcal)$, sampling $\brho\sim\hyperposterior_S$ amounts to selecting a single deterministic majority vote, for which disintegrated PAC-Bayesian theory provides high-probability generalization guarantees.
Importantly, this construction preserves the possibility of directly optimizing the resulting bounds in a self-bounding learning procedure (see \Cref{sec:algo}).
More precisely, our objective is to transform the guarantee of \Cref{theo:zantedeschi} for a stochastic classifier into a guarantee for a single majority vote.
Given a learning set $S$, the learner outputs a hyper-posterior~$Q_S$ over $\Wcal$, from which a single $\posterior$ weight vector is drawn.
We then seek a high-probability upper bound on $\Risktrue(\MV_{\brho})$. 

\looseness=-1
We consider two common settings in PAC-Bayes.
In \Cref{subsec:data-independent}, we assume the data-independent setting,  for which the set of voters $\Hcal$, the weight space $\Wcal$, and the hyper-prior $\hyperprior$ are fixed independently of the learning set.
This yields two disintegrated PAC-Bayesian guarantees: \Cref{thm:rivasplata_smv}, derived from \citet{rivasplata2020pac}, and \Cref{thm:viallard_smv}, derived from \citet{viallard2024general}.
Then, in \Cref{subsec:data-dependent}, we extend our analysis to data-dependent voters and priors through a sample-splitting strategy, leading to \Cref{thm:rivasplata_smv_DDBC,thm:viallard_smv_DDBC}.
This allows the voters and the hyper-priors to be learned from data while preserving PAC-Bayesian validity.

\subsection{Data-Independent Base Voters and Priors}\label{subsec:data-independent}
\looseness=-1
In the context of stochastic majority votes, the standard data-independent setting means that the set of voters $\Hcal$, from which the majority votes are built, is fixed before observing the data in $S$. 
Consequently, the weight space $\Wcal$ is fixed as well, and the hyper-prior distribution $\hyperprior \in \Mcal^*(\Wcal)$ is chosen independently of $S$.
The learning algorithm $A$ then uses the learning set $S \sim \Dcal^n$ to output a hyper-posterior $\hyperposterior_S \in \Mcal(\Wcal)$.

\looseness=-1
Technically, the bounds derivation consists of applying the disintegrated PAC-Bayesian theorems of \Cref{sec:disintegration} with the weight space $\Wcal$ ``playing the role'' of the hypothesis space. 
Therefore, the bounds hold for the drawing of a single weight vector $\posterior \in \Wcal$ that parametrizes the deterministic majority vote $\MV_{\posterior}$. 
Consequently, the empirical and true risks appearing in the bounds are $\Riskemp(\MV_{\posterior})$ and $\Risktrue(\MV_{\posterior})$.

\looseness=-1
First, in \Cref{thm:rivasplata_smv}, we leverage \Cref{thm:rivasplata} \citep{rivasplata2020pac}, in which  the penalty term takes the form of a pointwise log-density ratio $
\ln \frac{\hyperposterior_S(\posterior)}{\hyperprior(\posterior)}$ between the learned hyper-posterior and the hyper-prior, evaluated at the sampled weight vector~$\posterior$. 
This term compares how likely it is to sample $\posterior$ under the learned hyper-posterior and under the hyper-prior.
\begin{restatable}[\small Disintegrated bound for stochastic majority votes]{theorem}{rivasplataSMV}\label{thm:rivasplata_smv}
For any distribution $\Dcal$, finite hypothesis set~$\Hcal$, majority votes weight space $\Wcal\subseteq\Rbb^{|\!\Hcal\!|}$, hyper-prior $\hyperprior\in\Mcal^*(\Wcal)$, loss $\ell:\widehat{\Ycal}\times\Ycal\to[0,1]$, algorithm $A:(\Xcal{\times}\Ycal)^n \times \Mcal^*(\Wcal) \rightarrow \Mcal(\Wcal)$, and $\delta\in(0,1]$, we have
\begin{align*}
\Prob_{\substack{S\sim\Dcal^n \\ \brho \sim \hyperposterior_S}} \left[\kl(\Riskemp(\MV_{\brho})\|\Risktrue(\MV_{\brho}))\leq \frac{1}{n}\!\left(\ln \frac{\hyperposterior_S(\brho)}{\hyperprior(\brho)}+\ln\frac{2\sqrt{n}}{\delta}\right)\right] \geq 1\!-\!\delta, \ \text{with} \ \hyperposterior_S \triangleq A(S, \hyperprior).
\end{align*}
\end{restatable}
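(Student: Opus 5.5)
We obtain the statement as a direct instantiation of \Cref{thm:rivasplata}, with the weight space $\Wcal$ in place of the hypothesis space, the hyper-prior $\hyperprior$ and hyper-posterior $\hyperposterior_S=A(S,\hyperprior)$ in place of $\prior$ and $\post_S$, and the deviation function
\begin{align*}
\varphi(\brho,S) \triangleq n\,\kl\big(\Riskemp(\MV_{\brho})\,\|\,\Risktrue(\MV_{\brho})\big).
\end{align*}
Nothing in the proof of \Cref{thm:rivasplata} uses that the ``hypotheses'' are the voters themselves. It only needs a measurable space carrying a prior with full support, a data-dependent posterior given by a Markov kernel, and a measurable $\varphi$. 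Here $\brho\mapsto\MV_{\brho}(x)$ is a finite argmax of linear functions of $\brho$ (with a fixed tie-breaking rule), so it is measurable. Hence $\varphi$ is measurable on $\Wcal\times(\Xcal\times\Ycal)^n$. If $\Wcal$ is continuous, as in the Dirichlet case, $\hyperposterior_S(\brho)$ and $\hyperprior(\brho)$ denote densities with respect to a common reference measure. We need $\hyperposterior_S\ll\hyperprior$, which the assumption $\hyperprior\in\Mcal^*(\Wcal)$ provides. The proof of \Cref{thm:rivasplata} (Markov's inequality applied to $\frac{\hyperposterior_S(\brho)}{\hyperprior(\brho)}e^{-\varphi(\brho,S)}$, followed by a change of measure from $\hyperposterior_S$ to $\hyperprior$) goes through unchanged.

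Applying the theorem gives, with probability at least $1-\delta$ over $S\sim\Dcal^n$ and $\brho\sim\hyperposterior_S$,
\begin{align*}
n\,\kl\big(\Riskemp(\MV_{\brho})\|\Risktrue(\MV_{\brho})\big)\le \ln\frac{\hyperposterior_S(\brho)}{\hyperprior(\brho)}+\ln\Big[\frac1\delta\,\Esp_{S'\sim\Dcal^n}\Esp_{\brho'\sim\hyperprior} e^{n\kl(\widehat\Risk_{S'}(\MV_{\brho'})\|\Risktrue(\MV_{\brho'}))}\Big].
\end{align*}
Since $\hyperprior$ does not depend on $S'$, Fubini lets us swap the two expectations. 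For each fixed $\brho'$, the classifier $\MV_{\brho'}$ is a fixed predictor with loss in $[0,1]$. The empirical risk $\widehat\Risk_{S'}(\MV_{\brho'})$ is therefore an average of $n$ i.i.d.\ $[0,1]$-valued variables with mean $\Risktrue(\MV_{\brho'})$. Maurer's lemma \citep{maurer2004note} (the same ingredient behind \Cref{thm:classical_kl}) then gives
\begin{align*}
\Esp_{S'} e^{n\kl(\widehat\Risk_{S'}(\MV_{\brho'})\|\Risktrue(\MV_{\brho'}))}\le 2\sqrt n .
\end{align*}
Taking the expectation over $\hyperprior$ keeps this at most $2\sqrt n$. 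Dividing by $n$ then yields exactly the stated bound.

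The only step needing care is the first one: justifying that \Cref{thm:rivasplata} transfers to the possibly continuous space $\Wcal$, with densities in place of probability mass functions. This mainly means checking measurability of $\brho\mapsto\Riskemp(\MV_{\brho})$ and the absolute continuity condition. I would handle it by reproving the Markov plus change-of-measure step in density form in one line. The moment bound is standard, since the loss being $[0,1]$-valued is all Maurer's lemma requires.
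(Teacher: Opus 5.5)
Your proposal is correct and matches the paper's proof. You instantiate \Cref{thm:rivasplata} on $\Wcal$ with $\varphi(\brho,S)=n\,\kl(\Riskemp(\MV_{\brho})\|\Risktrue(\MV_{\brho}))$, bound the exponential moment by $2\sqrt{n}$ via Maurer's lemma for each fixed $\brho'$, and divide by $n$; your measurability and density remarks are extra care the paper omits.

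One slip in your parenthetical sketch of \Cref{thm:rivasplata}: Markov's inequality must be applied to $\frac{\hyperprior(\brho)}{\hyperposterior_S(\brho)}e^{\varphi(\brho,S)}$, not to its reciprocal. Under $S$ and $\brho\sim\hyperposterior_S$ its expectation is at most $\Esp_{S'}\Esp_{\brho'\sim\hyperprior}e^{\varphi(\brho',S')}$ by change of measure. Your version would only bound $\varphi$ from below, so use the corrected orientation if you reprove the step in density form.
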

\looseness=-1
\begin{proof}
Theorem~\ref{thm:rivasplata} with ``$\Hcal \triangleq \Wcal$'' and $\displaystyle \varphi(\brho,S)
\triangleq n \, \kl\big(\Riskemp(\MV_{\brho})\,\|\,\Risktrue(\MV_{\brho})\big)$
leads to
\begin{align*}
\underset{\substack{S \sim \Dcal^n\\\brho \sim \hyperposterior_{S}}}{\mathbb{P}}
\left[ n \, \kl\big(\Riskemp(\MV_{\brho})\,\|\,\Risktrue(\MV_{\brho})\big) 
\leq \ 
\ln \left[\frac{\hyperposterior_{S}(\posterior)}{\hyperprior(\posterior)}\right]
+
\ln \left(\frac1\delta\, 
\underset{S^{\prime} \sim \Dcal^n}{\E}\,  \underset{\posterior^{\prime} \sim \hyperprior}{\E}\  e^{n \, \kl\big(\widehat{\Risk}_{S'}(\MV_{\brho'})\,\|\,\Risktrue(\MV_{\brho'})\big)} \right)
\right] \geq 1\!-\!\delta.
\end{align*}
For any $\brho \in \Wcal$, the risk $\Riskemp(\MV_{\brho})$ is the empirical mean of \iid random variables in $[0,1]$.
Thus, by applying an exponential moment inequality  for the empirical KL divergence of bounded \iid variables \citep{maurer2004note}, we have
$\Esp_{S' \sim \Dcal^n} \Esp_{\brho' \sim P}
e^{n \, \kl(\widehat{R}_{S'}(\MV_{\brho'})\,\|\,\Risktrue(\MV_{\brho'}))}
\le 2\sqrt{n}$.
Plugging into the previous expression leads to
$$
\underset{\substack{S \sim \Dcal^n\\\posterior \sim \hyperposterior_{S}}}{\mathbb{P}} \left[n\,\kl\left(\Riskemp(\MV_{\brho})\| \Risktrue(\MV_{\brho})\right)\leq \ln \frac{\hyperposterior_S(\brho)}{\hyperprior(\brho)}+\ln\frac{2\sqrt{n}}{\delta}\, \right] \geq 1-\delta.
$$
Dividing by $n$ concludes the proof.
\end{proof}
\looseness=-1
Second, we derive in~\Cref{thm:viallard_smv} a guarantee based on \Cref{thm:viallard} \citep{viallard2024general}.
Here, the penalty term is not evaluated using the sampled weight vector; instead, it depends on the Rényi divergence $\DivRen(\hyperposterior_S \| \hyperprior)$ between the whole distributions $\hyperposterior_S$ and  
$\hyperprior$.

\begin{restatable}[\small Rényi divergence-based disintegrated bound for stochastic majority votes]{theorem}{viallardSMV}\label{thm:viallard_smv}
For any distribution $\Dcal$, finite hypothesis set $\Hcal$, majority votes weight space $\Wcal\subseteq\Rbb^{|\!\Hcal\!|}$, hyper-prior $\hyperprior\in\Mcal^*(\Wcal)$, loss \mbox{$\ell:\widehat{\Ycal}\times\Ycal\to[0,1]$}, $\lambda>1$, algorithm $A:(\Xcal{\times}\Ycal)^n \times \Mcal^*(\Wcal) \rightarrow \Mcal(\Wcal)$, and $\delta\in(0,1]$, we have
\begin{align*}\Prob_{\substack{S\sim\Dcal^n\\\brho\sim \hyperposterior_S}}\left[
    \kl\Bigl(\Riskemp(\MV_{\brho}) \;\big\|\; \Risktrue(\MV_{\brho})\Bigr) \le \frac{1}{n} \left(
      \frac{2 \lambda\!-\!1}{\lambda\!-\!1}\ln \frac{2}{\delta} + \DivRen(\hyperposterior_S \| \hyperprior) + \ln(2 \sqrt{n}) \right)
\right] \ge 1\!-\!\delta,\ \mbox{with} \ \hyperposterior_S \triangleq A(S, \hyperprior).
\end{align*}
\end{restatable}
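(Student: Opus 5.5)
We apply \Cref{thm:viallard} with the weight space $\Wcal$ playing the role of the hypothesis space. The natural choice $\varphi(\brho,S)=n\,\kl(\Riskemp(\MV_{\brho})\|\Risktrue(\MV_{\brho}))$ used for \Cref{thm:rivasplata_smv} does not fit here. \Cref{thm:viallard} requires $\varphi>0$ and controls $\frac{\lambda}{\lambda-1}\ln\varphi$. So we take instead
\[
\varphi(\brho,S)\triangleq \exp\!\Big(\tfrac{\lambda-1}{\lambda}\, n\,\kl\big(\Riskemp(\MV_{\brho})\,\|\,\Risktrue(\MV_{\brho})\big)\Big).
\]
This function is strictly positive and measurable, since it is a composition of measurable maps of $(\brho,S)$. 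It is chosen so that $\frac{\lambda}{\lambda-1}\ln\varphi(\brho,S)=n\,\kl(\Riskemp(\MV_{\brho})\|\Risktrue(\MV_{\brho}))$ and $\varphi(\brho',S')^{\frac{\lambda}{\lambda-1}}=e^{n\,\kl(\widehat{\Risk}_{S'}(\MV_{\brho'})\|\Risktrue(\MV_{\brho'}))}$.

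Plugging this choice into \Cref{thm:viallard} with hyper-prior $\hyperprior$ and hyper-posterior $\hyperposterior_S=A(S,\hyperprior)$ gives, with probability at least $1-\delta$ over $S\sim\Dcal^n$ and $\brho\sim\hyperposterior_S$,
\[
n\,\kl\big(\Riskemp(\MV_{\brho})\|\Risktrue(\MV_{\brho})\big)\le \tfrac{2\lambda-1}{\lambda-1}\ln\tfrac{2}{\delta}+\DivRen(\hyperposterior_S\|\hyperprior)+\ln\Big[\Esp_{S'\sim\Dcal^n}\Esp_{\brho'\sim\hyperprior}e^{n\,\kl(\widehat{\Risk}_{S'}(\MV_{\brho'})\|\Risktrue(\MV_{\brho'}))}\Big].
\]

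Next we bound the exponential moment, exactly as in the proof of \Cref{thm:rivasplata_smv}. Since $\hyperprior$ does not depend on the data, we can swap the two expectations by Fubini (the integrand is nonnegative). For each fixed $\brho'$, $\widehat{\Risk}_{S'}(\MV_{\brho'})$ is the mean of $n$ \iid $[0,1]$-valued losses. \citet{maurer2004note}'s inequality then gives $\Esp_{S'}e^{n\kl(\cdot\|\cdot)}\le 2\sqrt n$. Hence the logarithmic term is at most $\ln(2\sqrt n)$, and dividing by $n$ yields the statement.

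The only delicate point is the reparametrization of $\varphi$ through the exponent $\frac{\lambda-1}{\lambda}$, which is needed so that both the left-hand side and the moment term reduce to the standard $\kl$ quantities. Once that is in place, the rest is a direct substitution.
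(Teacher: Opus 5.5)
Your proposal is correct and follows the paper's proof essentially step for step. You use the same choice $\varphi(\brho,S)=e^{n\frac{\lambda-1}{\lambda}\kl(\Riskemp(\MV_{\brho})\|\Risktrue(\MV_{\brho}))}$ in \Cref{thm:viallard}, with $\Wcal$ in the role of the hypothesis space, the same observation that $\varphi^{\frac{\lambda}{\lambda-1}}=e^{n\kl}$ so that the moment term is at most $\ln(2\sqrt{n})$ by \citet{maurer2004note}, and the same final division by $n$.
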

\begin{proof}
First, we apply \Cref{thm:viallard} with  ``$\Hcal \!\triangleq\! \Wcal$'' and $
\varphi(\brho,S) \!\triangleq\! e^{n \frac{\lambda-1}{\lambda}\kl\big(\Riskemp\left(\MV_{\brho}\right) \| \Risktrue\left(\MV_{\brho}\right)\big)}.$ 
For any 
$\brho'\! \in \!\Wcal$, the 
\textit{r.v.} $\Riskemp(\MV_{\brho'})$ is an empirical mean of \iid bounded variables in $[0,1]$. Hence, 
as in \Cref{thm:rivasplata_smv}, we have
\begin{align*}
\Esp_{S' \sim \Dcal^n}\Esp_{\brho' \sim P}
 \left(
e^{n\frac{\lambda-1}{\lambda}
\kl\left(
\widehat R_{S'}(\MV_{\brho'})
\middle\|
\Risktrue(\MV_{\brho'})
\right)}
\right)^{\frac{\lambda}{\lambda-1}}
\!= \!\Esp_{S' \sim \Dcal^n}\Esp_{\brho' \sim \hyperprior}
 e^{n \kl\left(\widehat R_{S'}\left(\MV_{\brho'}\right) \| \Risktrue\left(\MV_{\brho'}\right)\right)}
\le 2\sqrt{n}. 
\end{align*}
Combining this inequality with Theorem~\ref{thm:viallard} we have
\begin{align*}
\Prob_{\substack{S\sim\Dcal^n\\\brho\sim \hyperposterior_S}}\left[ n \kl\big(\Riskemp(\MV_{\brho})\,\|\,\Risktrue(\MV_{\brho})\big)
\le
\frac{2\lambda-1}{\lambda-1}\ln \frac{2}{\delta}
+ \DivRen(\hyperposterior_S \| \hyperprior)
+ \ln(2\sqrt{n})\right]\geq 1-\delta.
\end{align*}
Dividing by $n$ concludes the proof.
\end{proof}
\looseness=-1
As mentioned earlier, the difference between \Cref{thm:rivasplata_smv,thm:viallard_smv} lies in the penalty term.
Indeed, \Cref{thm:rivasplata_smv} involves a pointwise density ratio evaluated at the sampled weight vector, meaning its value depends only on the sampled $\post \sim \hyperposterior_S$.
In particular, it can be small when $\post$ has similar densities under the hyper-prior and the hyper-posterior, potentially leading to a tighter certificate.
However, from an algorithmic point of view, this can make it 
more difficult to optimize.
In contrast, \Cref{thm:viallard_smv} evaluates the penalty for the entire hyper-prior and hyper-posterior distributions, which generally leads to a more stable and tractable optimization objective, since it does not depend on a particular sampled weight vector.
In both cases, the result is a PAC-Bayesian bound on the risk of the deterministic majority vote $\MV_{\post}$ associated with a weight vector $\post \sim \hyperposterior_S$.

\subsection{Data-Dependent Base Voters and Priors }
\label{subsec:data-dependent}
\looseness=-1
The data-independent bounds of \Cref{subsec:data-independent} fall within the most classical PAC-Bayesian framework, which can be used when the learning task is simple enough that a fixed, data-independent family of voters can provide a sufficiently rich representation.
However, for more complex tasks (\eg, multiclass classification or with high-dimensional data), initializing voters that correctly cover the feature space while allowing for every possible label would require a large number of base voters.
This would increase the dimension of the hypothesis set, and thus the dimension of the associated weight space $\Wcal$, which may deteriorate the PAC-Bayesian bound and the ability to optimize the bound through the penalty term.
To overcome this drawback, we propose, following~\cite{mhammedi2019pac}, to consider data-dependent base voters using a cross-bounding certificate\footnote{
Note that there is a vast literature in PAC-Bayes that allows data-dependent base voters while also having valid generalization bounds \citep[\eg,][]{thiemann2017strongly,mhammedi2019pac,dziugaite2018data,zantedeschi2021learning}.}.
\looseness=-1
To do so, we split the learning set $S$ into two disjoint subsets $\Sone = \{(x_i, y_i)\}_{i=1}^\none$ and \mbox{$\Stwo  = \{(x_i, y_i)\}_{i=1}^\ntwo$} for $\none +\ntwo=n$.
The key idea is that each subset is used to learn a base voter set and a hyper-prior that will be evaluated on the other subset; for instance, $\Stwo$ is used to learn the base voters $\Hcalone$, to define the weight space $\Wcalone$, and to instantiate the hyper-prior $\hyperpriorone$.
This ensures that $\Hcalone$, $\Wcalone$ and $\hyperpriorone$ are independent of $\Sone$.
Symmetrically, $\Sone$ is used to learn $\Hcaltwo$, to define $\Wcaltwo$, and to instantiate~$\hyperpriortwo$, so that they are independent of $\Stwo$.
Thus, unlike the data-independent setting, both the base classifiers and their posterior weights can be learned from data.
Concretely, the algorithm~\textit{(i)} learns from $\Sone$ a hyper-posterior~$\hyperposterior_{\Sone }$, then samples a weight vector $\posteriorone$ to define the deterministic majority vote $\MV_{\posteriorone}$, and \textit{(ii)} learns from $\Stwo $ a hyper-posterior $\hyperposterior_{\Stwo }$, then  samples $\posteriortwo$ defining $\MV_{\posteriortwo}$.
The resulting bounds control a convex combination of the risks of the two sampled majority votes, weighted by a parameter~\mbox{$\tau\in[0,1]$}. 
We now extend the data-independent guarantees to data-dependent priors and base voters.
More precisely, \Cref{thm:rivasplata_smv_DDBC} extends \Cref{thm:rivasplata_smv}, while \Cref{thm:viallard_smv_DDBC} extends \Cref{thm:viallard_smv}.
\begin{restatable}[\small Data-dependent disintegrated bound for stochastic majority votes]{theorem}{rivasplataSmvDDBC}\label{thm:rivasplata_smv_DDBC}\looseness=-1
For any distribution $\Dcal$, sizes $(\none,\ntwo)$  such that $\none\!+\!\ntwo\!=\!n$, finite hypothesis sets $\Hcalone$ and $\Hcaltwo$, majority votes weight spaces $\Wcalone \!\subseteq\!\Rbb^{|\!\Hcalone\!|}$ and $\Wcaltwo\!\subseteq\!\Rbb^{|\!\Hcaltwo\!|}$, hyper-prior distributions $\hyperpriorone \in\Mcal^*(\Wcalone )$ and $\hyperpriortwo \in\Mcal^*(\Wcaltwo)$, loss $\ell:\widehat{\Ycal}{\times}\Ycal\!\to\![0,1]$, algorithms $\Aone:(\Xcal {\times} \Ycal)^\none {\times} \Mcal^*(\Wcalone )\!\to\! \Mcal(\Wcalone )$ and $\Atwo :(\Xcal {\times} \Ycal)^\ntwo {\times} \Mcal^*(\Wcaltwo)\!\to\! \Mcal(\Wcaltwo)$, and $\delta\!\in\! (0,1]$ and $\tau\!\in\![0,1]$, we have 
\begin{align*}
\Prob_{\substack{S\sim\Dcal^n \\ \posteriorone  \sim \hyperposterior_{\Sone } \\ \posteriortwo  \sim \hyperposterior_{\Stwo }}}\!\left[ \!
\begin{array}{r}
\operatorname{kl}\left( \tau \widehat{R}_{\Sone}\!(\MV_{\posteriorone })\! +\! (1\!-\!\tau)\, \widehat{R}_{\Stwo}\!(\MV_{\posteriortwo })
    \;\middle\|\;
    \tau  \Risktrue(\MV_{\posteriorone }) \!+\! (1\!-\!\tau) \Risktrue(\MV_{\posteriortwo })
\right)
\\[2mm]
\displaystyle
\leq \frac{\tau}{\none}\!\left[\ln \frac{\hyperposterior_{\Sone }(\posteriorone )}{\hyperpriorone(\posteriorone )}+\ln\frac{4\sqrt{\none}}{\delta}\right]+
\frac{1\!-\!\tau}{\ntwo}\!\left[\ln \frac{\hyperposterior_{\Stwo }(\posteriortwo )}{\hyperpriortwo (\posteriortwo )}+\ln\frac{4\sqrt{\ntwo}}{\delta}\right]
\end{array}\!\right]\geq 1-\delta,
\end{align*}
where $\hyperposterior_{\Sone } \triangleq \Aone(\Sone , \hyperpriorone)$ and $\hyperposterior_{\Stwo } \triangleq \Atwo(\Stwo , \hyperpriortwo)$.
\end{restatable}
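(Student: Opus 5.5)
The plan is to apply \Cref{thm:rivasplata_smv} separately to each half of the split, take a union bound, and then merge the two resulting inequalities using the joint convexity of $\kl$.

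\textbf{Step 1 (first half).} Condition on $\Stwo$. Then $\Hcalone$, $\Wcalone$ and $\hyperpriorone$ are fixed, and $\Sone\sim\Dcal^{\none}$ is independent of them. Apply \Cref{thm:rivasplata_smv} with sample $\Sone$, size $\none$, hyper-prior $\hyperpriorone$, algorithm $\Aone$, and confidence $\delta/2$. With probability at least $1-\delta/2$ over $\Sone$ and $\posteriorone\sim\hyperposterior_{\Sone}$, we get
\[
\kl\big(\widehat R_{\Sone}(\MV_{\posteriorone})\,\|\,\Risktrue(\MV_{\posteriorone})\big)\le \frac{1}{\none}\Big[\ln\frac{\hyperposterior_{\Sone}(\posteriorone)}{\hyperpriorone(\posteriorone)}+\ln\frac{4\sqrt{\none}}{\delta}\Big].
\]
This holds for every fixed value of $\Stwo$, so integrating over $\Stwo$ (and over $\posteriortwo$, on which the event does not depend) shows it also holds unconditionally with probability at least $1-\delta/2$.

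\textbf{Step 2 (second half).} Symmetrically, condition on $\Sone$ and apply \Cref{thm:rivasplata_smv} to $\Stwo$, $\ntwo$, $\hyperpriortwo$, $\Atwo$ with confidence $\delta/2$. This gives the analogous inequality for $\posteriortwo$ with $\ln\frac{4\sqrt{\ntwo}}{\delta}$. A union bound shows that both inequalities hold simultaneously with probability at least $1-\delta$.

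\textbf{Step 3 (combination).} Use the joint convexity of $(p,q)\mapsto\kl(p\|q)$, which follows from the log-sum inequality since $\kl$ is a KL divergence between Bernoulli distributions. It gives
\[
\kl\big(\tau \hat p_1+(1-\tau)\hat p_2\,\|\,\tau p_1+(1-\tau)p_2\big)\le \tau\,\kl(\hat p_1\|p_1)+(1-\tau)\,\kl(\hat p_2\|p_2).
\]
Plugging in the two inequalities from Steps 1--2, multiplied by $\tau$ and $1-\tau$ respectively, yields exactly the stated bound. The edge cases $\tau\in\{0,1\}$ are covered trivially.

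\textbf{Main obstacle.} The delicate point is the conditioning argument in Step 1. \Cref{thm:rivasplata_smv} requires the hypothesis set, the weight space and the hyper-prior to be data-independent. Here they depend on $\Stwo$, so it must be justified that, given $\Stwo$, $\Sone$ is still i.i.d.\ from $\Dcal^{\none}$. This is true because the two subsets are disjoint parts of an i.i.d.\ sample. The conditional probability bound then integrates to the unconditional one.

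A related subtlety is that $\Hcalone$ is learned from $\Stwo$ while the bound involves $\Risktrue(\MV_{\posteriorone})$. This is fine, since the risk is a function of the fixed voters once we condition on $\Stwo$.
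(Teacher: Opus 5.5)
Your proposal is correct and follows the paper's proof essentially step for step. You apply \Cref{thm:rivasplata_smv} to each split with confidence $\delta/2$, conditioning on the other split so that the voters and hyper-prior are fixed. You then extend each event to the joint draw of both weight vectors, take a union bound, and conclude with the joint convexity of $\kl$. Your explicit justification that $\Sone$ remains $\Dcal^{\none}$-distributed given $\Stwo$, and that the conditional bound integrates to an unconditional one, is slightly more careful than the paper, which simply asserts the independence and ``adjoins'' the other draw.
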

\begin{restatable}[\small Data-dependent Rényi-based disintegrated bound for stochastic majority votes]{theorem}{viallardSmvDDBC}\label{thm:viallard_smv_DDBC}
Under the same assumptions as \Cref{thm:rivasplata_smv_DDBC} and with $\lambda>1$, we have
\begin{align*}
\Prob_{\substack{S\sim\Dcal^n \\ \posteriorone  \sim \hyperposterior_{\Sone } \\ \posteriortwo  \sim \hyperposterior_{\Stwo }}}\!\left[ \!
\begin{array}{l}
\operatorname{kl}\left(
    \tau \widehat{R}_{\Sone }(\MV_{\posteriorone }) \!+\! (1\!-\!\tau) \widehat{R}_{\Stwo }(\MV_{\posteriortwo })
    \;\middle\|\;
    \tau  \Risktrue(\MV_{\posteriorone }) \!+\! (1\!-\!\tau) \Risktrue(\MV_{\posteriortwo })
\right)\ \leq 
\\[2mm]\displaystyle
\frac{\tau}{\none}\Bigg[\frac{2 \lambda\!-\!1}{\lambda\!-\!1}\ln \frac{4}{\delta} \!+\! \DivRen(\hyperposterior_{\Sone } \| \hyperpriorone) \!+\! \ln(2 \sqrt{\none})\Bigg]  
\!+\! \frac{1\!-\!\tau}{\ntwo}\left[\frac{2 \lambda\!-\!1}{\lambda\!-\!1}\ln \frac{4}{\delta} + \DivRen(\hyperposterior_{\Stwo } \| \hyperpriortwo ) \!+\! \ln(2 \sqrt{\ntwo})\right]
\end{array}\hspace{-1.5mm}\right]\geq 1\!-\!\delta.
 \end{align*}
\end{restatable}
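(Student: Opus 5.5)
We prove the statement by obtaining a separate bound for each half of the split, combining the two with a union bound, and then merging them into a single kl statement through joint convexity of $\kl$.

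\textbf{Per-half bounds.} We work conditionally on one half of the sample. Fix $\Stwo$. Then $\Hcalone$, $\Wcalone$ and $\hyperpriorone$ are deterministic, and $\Sone\sim\Dcal^{\none}$ is independent of them. This is exactly the data-independent setting of \Cref{thm:viallard_smv}, with $n$ replaced by $\none$ and the algorithm $\Aone$. We apply that theorem with confidence $\delta/2$. Its term $\frac{2\lambda-1}{\lambda-1}\ln\frac{2}{\delta/2}$ becomes $\frac{2\lambda-1}{\lambda-1}\ln\frac4\delta$, which is where the $\ln\frac4\delta$ comes from. Define
\[
\epsilon_1 \triangleq \frac{1}{\none}\left[\frac{2\lambda-1}{\lambda-1}\ln\frac4\delta+\DivRen(\hyperposterior_{\Sone}\|\hyperpriorone)+\ln(2\sqrt{\none})\right],
\]
and let $E_1$ be the event $\kl(\widehat R_{\Sone}(\MV_{\posteriorone})\|\Risktrue(\MV_{\posteriorone}))\le\epsilon_1$. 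The theorem gives $\Prob[E_1 \mid \Stwo]\ge 1-\delta/2$ for every $\Stwo$. Integrating over $\Stwo$ (the extra independent draw $\posteriortwo$ is irrelevant to $E_1$) yields $\Prob[E_1]\ge1-\delta/2$ under the joint law over $S$, $\posteriorone$ and $\posteriortwo$.

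By the symmetric argument, conditioning on $\Sone$, we obtain an event $E_2$ with bound $\epsilon_2$ (defined with $\ntwo$, $\hyperposterior_{\Stwo}$, $\hyperpriortwo$) and $\Prob[E_2]\ge 1-\delta/2$. A union bound then gives $\Prob[E_1\cap E_2]\ge 1-\delta$.

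\textbf{Merging.} On $E_1\cap E_2$ we use that $\kl(\cdot\|\cdot)$ is jointly convex in its pair of arguments. This gives
\[
\kl\big(\tau \hat a_1+(1-\tau)\hat a_2 \,\big\|\, \tau a_1+(1-\tau)a_2\big)\le \tau\,\kl(\hat a_1\|a_1)+(1-\tau)\,\kl(\hat a_2\|a_2)\le \tau\epsilon_1+(1-\tau)\epsilon_2,
\]
where $\hat a_j=\widehat R_{S_j}(\MV_{\brho_j})$ and $a_j=\Risktrue(\MV_{\brho_j})$. The right-hand side $\tau\epsilon_1+(1-\tau)\epsilon_2$ is exactly the claimed bound.

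\textbf{Main obstacle.} The only delicate step is the conditioning. \Cref{thm:viallard_smv} requires the hyper-prior and weight space to be fixed before the sample is drawn, and the paper's construction guarantees this only after conditioning on the other half. So we must check two things: that $\Sone$ is independent of $\Stwo$ (true, since the examples are drawn i.i.d.), and that the probability statement of \Cref{thm:viallard_smv} remains valid when the objects it treats as fixed are measurable functions of $\Stwo$. The second point holds by the tower property, i.e., by averaging the conditional guarantee over $\Stwo$.
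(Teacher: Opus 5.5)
Your proposal is correct and matches the paper's proof step for step. You apply \Cref{thm:viallard_smv} to each half with confidence $\delta/2$, after conditioning on the other half so that $\Hcalone,\Wcalone,\hyperpriorone$ (respectively $\Hcaltwo,\Wcaltwo,\hyperpriortwo$) are fixed; you then take a union bound and conclude with the joint convexity of $\kl$. Your explicit tower-property justification of the conditioning step is a useful addition, since the paper only states it in words.
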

\begin{proof} The proofs of \Cref{thm:rivasplata_smv_DDBC,thm:viallard_smv_DDBC} are similar and follow  the proof scheme of \citet{zantedeschi2021learning}.
See, respectively, Appendices~
\ref{app:proof_riv_DDBC} and \ref{app:proof_viallard_DDBC}. 
\end{proof}
\looseness=-1
The interpretation of \Cref{thm:rivasplata_smv_DDBC,thm:viallard_smv_DDBC} is similar to that of the data-independent case, except that the guarantee is applied separately to the two complementary splits of $S$.
Each split is used to evaluate a majority vote whose voters and hyper-prior have been constructed from the other split.

This cross-dependence ensures that the hyper-prior remains independent of the data used to compute the empirical risk, thereby preserving PAC-Bayesian validity despite data-dependent voters.
The main advantage of this construction is that it extends our framework beyond fixed hypothesis sets.
In particular, the base voters can themselves be learned from data (\eg, using random forests or other multiclass predictors), after which our guarantees apply to the resulting learned voter sets.
Thus, \Cref{thm:rivasplata_smv_DDBC,thm:viallard_smv_DDBC} extend the disintegration principle to practical settings where both the voters and the hyper-posterior over their weights are learned from data.

In both the data-independent (\Cref{subsec:data-independent}) and data-dependent (\Cref{subsec:data-dependent}) settings, the guarantees apply to deterministic majority votes.
The former bounds the risk of a single sampled majority vote, whereas the latter bounds the risk of a convex combination of the risks of two sampled majority votes drawn from independent data splits.
Unlike the stochastic majority vote bound of \Cref{theo:zantedeschi}, neither result involves an expectation over majority votes: once the weight vectors have been sampled, the predictors are fixed and deterministic.
This provides the theoretical foundation of our self-bounding algorithm introduced below.

\section{Optimization Algorithm}\label{sec:algo}
\looseness=-1
A key characteristic of the bounds of \Cref{sec:contribution} is that they 
can serve as objective functions to minimize during training, leading 
to a self-bounding procedure where the generalization bound
is used as the optimization objective.
Note that self-bounding algorithms have recently regained interest in PAC-Bayes
\citep[\eg,][]{viallard2021self,zantedeschi2021learning,biggs2022margins,rivasplata2022pac,viallard2023pac,atbir2026pac}.

\looseness=-1
Although our disintegration analysis is not restricted to a specific family of hyper-prior and hyper-posterior distributions, its practical instantiation depends on the distribution chosen over the weight space~$\Wcal$.
The bounds apply to any pair of distributions for which the penalty term and empirical objective can be evaluated and optimized.
Following 
\citet{zantedeschi2021learning}, we use Dirichlet distributions and restrict the admissible weight vectors to the simplex $\Wcal = \{ \post \in [0,1]^{|\!\Hcal\!|} \,|\, \|\post\|_1 =1\}$ (for completeness, we provide the instantiation in~\Cref{app:dir specialization}).
Under this choice, our penalty terms admit closed-form expressions.

\looseness=-1
Since each theorem induces a specific learning objective (\Cref{eq:obj_rivasplata_dir,eq:obj_viallard_dir,eq:obj_rivasplata_DDBC_dir,eq:obj_viallard_DDBC_dir} of~\Cref{app:dirichlet_objectives}), we denote by $\mathcal B(\balpha;S,\hyperprior,\delta)$ the bound to minimize,  
where $\balpha$ parametrizes the Dirichlet hyper-posterior, $S$ is the learning set, $P$ the hyper-prior, and $\delta$ the confidence parameter. 
\Cref{alg} presents our generic learning procedure. 
\begin{algorithm}[H]
\caption{Self-bounding optimization of the Dirichlet hyper-posterior}
\label{alg}
\begin{algorithmic}[1]
\STATE \textbf{Input:} learning set $S$, voters $\Hcal$, hyper-prior $\hyperprior=\Dir(\bbeta)$ with $\bbeta\in\R_{>0}^{|\!\Hcal\!|}$, objective $\mathcal B$, number of epochs $T$, learning rate~$\eta$, confidence parameter $\delta$
\STATE \textbf{Output:} learned hyper-posterior $\hyperposterior_S=\Dir(\balpha)$ and deterministic majority vote $\MV_{\brho}$
\STATE Initialize the Dirichlet parameter $\balpha\in\R_{>0}^{|\!\Hcal\!|}$
\FOR{each epoch $t = 1,\ldots,T$}
\FOR{each mini-batch $\mathcal U \subseteq S$}
\STATE Draw a weight vector $\brho \sim \Dir(\balpha)$
\STATE Build the deterministic majority vote $\MV_{\brho}$
\STATE Compute the empirical risk $\widehat{\Risk}_{\mathcal{U}}(\MV_{\brho})$
\STATE Compute the self-bounding objective $\mathcal B(\balpha;\mathcal U,\hyperprior,\delta)$
\STATE Update $\balpha$ using gradient descent
\ENDFOR
\ENDFOR
\STATE Draw a final weight vector $\brho\sim\Dir(\balpha)$.
\STATE \textbf{Return:} $\hyperposterior_S=\Dir(\balpha)$ and $\MV_{\brho}$
\end{algorithmic}
\end{algorithm}
\looseness=-1
Once the hyper-posterior $\hyperposterior_S$ has been learned, the final sampled weight vector $\brho \sim \hyperposterior_S$ defines the deterministic classifier $\MV_{\brho}$ used at prediction time.
The same PAC-Bayesian guarantee that was minimized during training provides an upper bound on the true risk of this sampled deterministic majority vote.
An important computational advantage of this learning procedure, when compared to the stochastic majority vote, is that it does not require evaluating the expected empirical risk over all possible majority votes (as in \Cref{theo:zantedeschi} and \cite{zantedeschi2021learning}), but simply the empirical risk of the drawn majority vote.
Indeed, the expected empirical risk consists of an indefinite integral and thus must be approximated using either integration estimation or the Monte Carlo technique.
By contrast, computing the empirical risk of a drawn majority vote is straightforward and computationally cheap.

\section{Experiments}
\label{sec:expe}
\looseness=-1
\textbf{Compared methods.} Following the experimental protocol of \citet{zantedeschi2021learning}, we compare our \textbf{DIS}integrated methods with their \textbf{S}tochastic \textbf{M}ajority \textbf{V}ote ones. 
\mbox{\textsc{DIS-R}} denotes our pointwise-density-ratio-based bound of \Cref{thm:rivasplata_smv}, \textsc{DIS-V} the Rényi divergence-based bound of \Cref{thm:viallard_smv}, \textsc{SMV-Exact} the closed-form stochastic majority vote objective 
of \citet{zantedeschi2021learning}, and \textsc{SMV-MC} its Monte Carlo approximation.
Unlike \textsc{SMV-Exact} and \textsc{SMV-MC}, which certify the expected risk of a stochastic model, \textsc{DIS-R} and \textsc{DIS-V} certify a single deterministic model sampled from the learned hyper-posterior.
For completeness, we compare against three classical PAC-Bayesian majority vote methods:
the \textbf{F}irst-\textbf{O}rder method (\textsc{FO}) relies on the factor-two bound of \Cref{eq:factor-two}, while the \textbf{Bin}omial method (\textsc{Bin}) employs the binomial bound of~\Cref{eq:bin-bound}
and the \textbf{S}econd-\textbf{O}rder method (\textsc{SO}) uses the joint-error bound of \Cref{eq:second-order-bound}. 
The C-bound is not included as available optimization approaches are limited to binary classification and do not readily extend to large datasets.
\medskip 

\looseness=-1
\textbf{Datasets} (see \Cref{app:datasets} for details)\textbf{.}
We use binary and multiclass classification datasets from UCI Machine Learning Repository~\citep{dua2017uci}, LIBSVM repository, and Fashion-MNIST~\citep{xiao2017fashion}.

\medskip 

\looseness=-1
\textbf{Optimization.}
All methods are optimized using Adam~\citep{kingma2015adam}, with exponential moving-average coefficients $(0.9,0.999)$.
The learning rate is initialized to $0.1$ and is divided by $10$ after $2$ consecutive epochs without improvement.
Training is limited to $100$ epochs, with early stopping after $25$ epochs without improvement.
The data are split into $80\%$ training and a $20\%$ test set, using mini-batches of size $128$ for binary datasets and $1,024$ for multiclass datasets.
For \textsc{SMV-MC}, the expected empirical risk is estimated using $10$ Monte Carlo samples.
For all methods involving Monte Carlo sampling or disintegrated guarantees, the $0$-$1$ loss is approximated during training using a sigmoid surrogate with slope parameter $c\!=\!100$.
All bounds are computed with confidence parameter $\delta\!=\!0.05$, and the final bound is evaluated on the entire training set using the empirical $0$-$1$ loss.
Dirichlet hyper-priors are initialized with concentration parameter $\bbeta\!=\![0.5, 0.5, \dots]$ and the hyper-posterior parameters are initialized independently and uniformly in $[0.01,2]$.
For \textsc{FO}, \textsc{SO} and \textsc{Bin} 
priors are categorical distributions and posterior parameters are initialized uniformly in $[0.01,2]$ before normalization, and the $N$ parameter of \textsc{Bin} is fixed at $N=100$.
For \textsc{DIS-V}, the  Rényi divergence order is fixed to $\lambda\!=\!1.5$.
Each experiment is repeated $10$ times, and we report the mean and standard deviation of the bound value, empirical test risk, and training time.

\medskip 

\looseness=-1
\textbf{Base voters.}
For binary classification, we consider the data-independent setting of \Cref{subsec:data-independent}.
The voter set is composed of decision stumps.
For each input feature, $10$ decision thresholds are evenly distributed over the range of the feature space.
The resulting voter set, weight space, and hyper-prior are fixed independently of the learning set.
The hyper-posterior is then learned using the complete learning set. 
For multiclass classification, we consider the data-dependent setting of \Cref{subsec:data-dependent}.
The training sample is split into two equally sized subsets, and the value of the weighting parameter\footnote{The ratio $\nicefrac12$ is a standard choice in the PAC-Bayes literature \citep[\eg,][]{germain2009pac,letarte2019dichotomize,perezortiz2021tighter,zantedeschi2021learning,viallard2024general}.} $\tau$ is set to $\nicefrac12$.
Each subset is used to learn the voters and hyper-prior evaluated on the complementary subset.
For each split, the voters are the $M=100$ decision trees of a random forest trained on the opposite subset.
At each node, $\sqrt{d}$ features are randomly selected among the $d$ input features, the split is chosen by minimizing the Gini impurity, and the tree depth is left unconstrained.

\begin{figure}[t!]
    \centering
    \includegraphics[width=1.\linewidth]{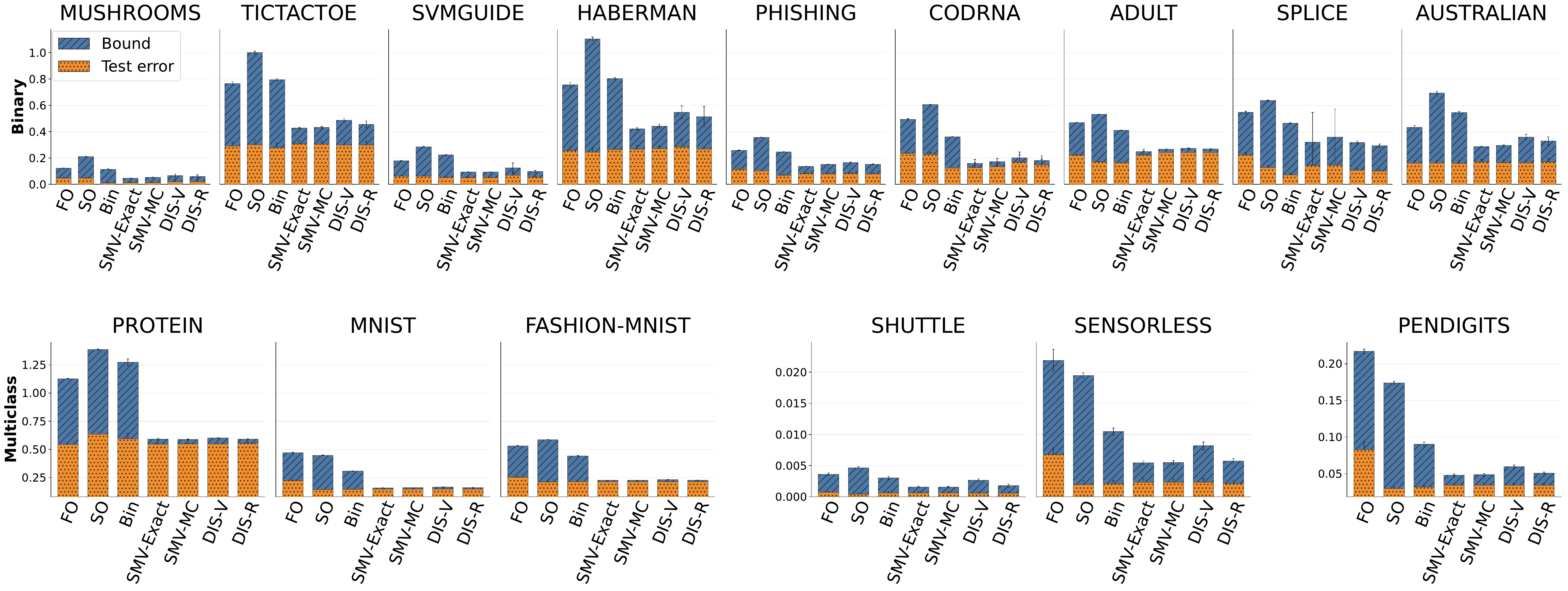}
    \caption{\looseness=-1 Test-error and PAC-Bayes bounds averaged over $10$  runs.
    Vertical lines show standard deviations.}
    \label{fig:best-bound-test-error}
\end{figure}

\medskip 

\looseness=-1
\textbf{Results---Test error and bound tightness.}
We report the results in~\Cref{fig:best-bound-test-error}.
Overall, all methods achieve comparable test errors.
In contrast, the behavior of the associated generalization bounds differs significantly between the methods that directly control the majority vote risk (\textsc{DIS-R}, \textsc{DIS-V}, \textsc{SMV-Exact}, \textsc{SMV-MC}) and the classical surrogate-based methods (\textsc{FO}, \textsc{SO}, \textsc{Bin}).
Across all datasets, \textsc{DIS-R}, \textsc{DIS-V}, \textsc{SMV-Exact},  and \textsc{SMV-MC} consistently yield substantially tighter bounds than the surrogate-based methods, whose bounds even become vacuous on the \textsc{Protein} dataset.\\
Comparing the stochastic majority vote with its disintegration, we observe that \textsc{SMV-Exact} and \textsc{SMV-MC} lead to the tightest bounds, with \textsc{DIS-R} outperforming them only on the \textsc{Splice} dataset.
This is expected, as they certify the expected risk of a stochastic predictor.
Nevertheless, our proposed disintegrated approaches (\textsc{DIS-R} and \textsc{DIS-V}) remain highly competitive while providing guarantees for a single deterministic majority vote, which constitutes a substantially stronger certification. 
As expected from the theory, \textsc{DIS-R} consistently produces tighter bounds than \textsc{DIS-V}.\\
Overall, these results show that certifying a single deterministic majority vote through disintegration instead of a stochastic majority vote only induces a very limited loss in bound tightness without degrading predictive performance.
The complete numerical results are reported in~\Cref{subsec:numerical_results}.

\begin{figure}[t!]
    \centering
    \includegraphics[width=1.\linewidth]{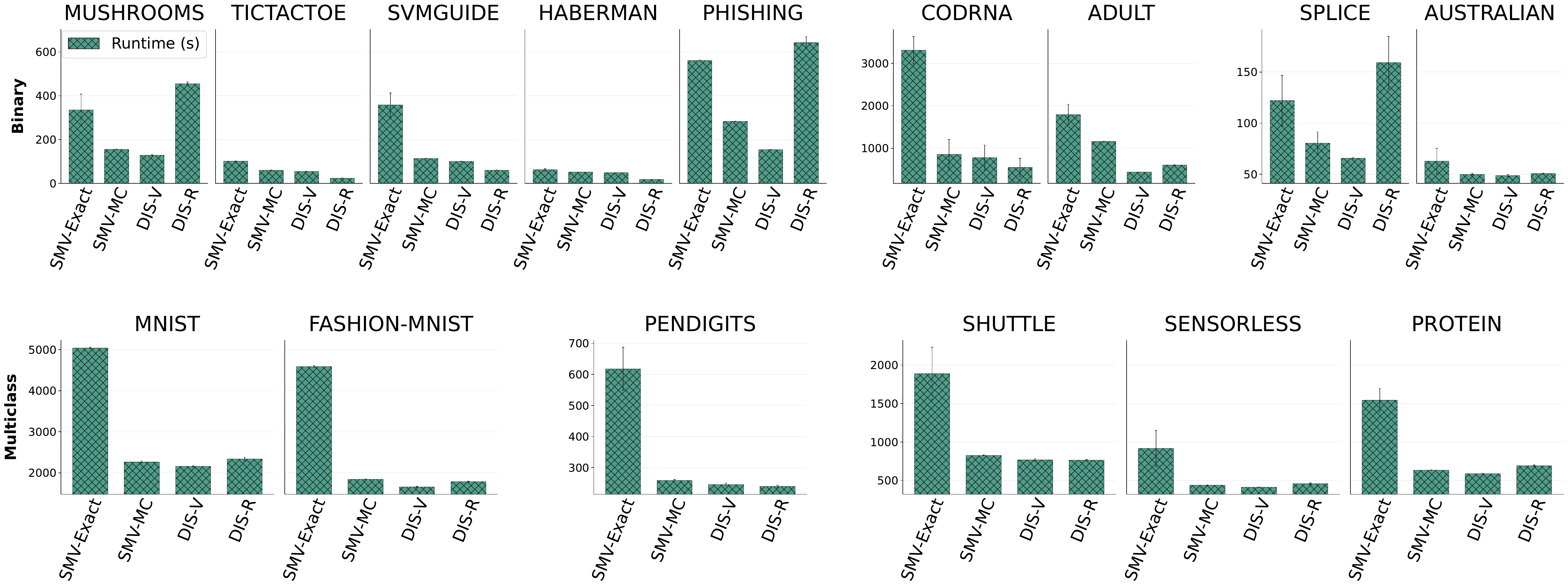}
    \caption{Training time across datasets, averaged over $10$ runs. Vertical lines show standard deviation.}
    \label{fig:best-time}
\end{figure}

\medskip 

\looseness=-1
\textbf{Results---Training times.}
We report in \Cref{fig:best-time} the training times of \textsc{SMV-Exact}, \textsc{SMV-MC}, \textsc{DIS-V} and \textsc{DIS-R}.
As expected, our disintegrated method \textsc{DIS-V} is faster on all datasets than the stochastic majority vote methods (\textsc{SMV-Exact} and \textsc{SMV-MC}).
Moreover, \textsc{DIS-R} is also faster on most datasets, with the exception of \textsc{Mushrooms}, \textsc{Phishing}, and \textsc{Splice}, where it is the slowest method.
The computational advantage of the disintegration is expected since our disintegrated objectives require optimizing only a single sampled majority vote at each mini-batch.
In contrast, \textsc{SMV-MC} approximates an expectation using multiple sampled majority votes, while \textsc{SMV-Exact} explicitly computes this expectation, resulting in a significantly higher computational cost.\\
Compared with \textsc{DIS-V}, the training time of \textsc{DIS-R} is more variable and is often slightly higher on the multiclass datasets.
This behavior is consistent with the additional optimization difficulty introduced by its sample-dependent pointwise density-ratio term, which can lead to a less stable optimization process and longer runtime on the \textsc{Mushrooms}, \textsc{Phishing}, and \textsc{Splice} datasets.\\
Overall, these results highlight a trade-off between optimization efficiency and certificate tightness.
While \textsc{DIS-V} provides the most computationally efficient objective, \textsc{DIS-R} tends to produce tighter certificates at the price of a more challenging optimization problem.

\section{Conclusion}
\label{sec:conclu}
\looseness=-1
In this paper, we introduce the first disintegrated PAC-Bayesian framework for stochastic majority votes.
Unlike classical PAC-Bayesian analyses of majority votes, which rely on surrogate quantities, our framework directly certifies the true risk of deterministic majority votes.
By applying recent disintegrated PAC-Bayesian theorems \citep{rivasplata2020pac,viallard2024general} to the space of majority vote weight vectors, we transform the stochastic majority vote guarantees of \citet{zantedeschi2021learning} into guarantees applied to a single sampled deterministic weighted majority vote, together with a self-bounding algorithm.
Empirically, our approach yields significantly tighter certificates than classical surrogate-based approaches while remaining competitive with stochastic majority votes and generally requiring less training time. 
A first natural extension is to investigate other families of distributions over predictor parameters beyond the Dirichlet family, which could further improve the flexibility 
of the learned majority votes while preserving tractable optimization.

\looseness=-1
More broadly and beyond majority votes, our work confirms that disintegrated PAC-Bayesian bounds naturally extend to higher levels of stochasticity, where the random object is no longer a base voter but an entire parameterization of the final predictor.
We believe that this perspective paves the way to new PAC-Bayesian self-bounding algorithms for 
modern machine learning models, where randomness can be introduced over latent representations, architectures, or other high-level model parameters rather than over individual predictors.

\subsection*{Acknowledgments.}
This work has been partly funded by public grants from the French
National Research Agency (ANR), namely the Famous project (ANR-23-CE23-0019), the DATeS project (ANR-25-CE23-6035).
Pascal Germain is supported by the NSERC Discovery grant RGPIN-2020-07223. 
Benjamin Leblanc is supported by a Mitacs Acceleration grant, in partnership with Intact Financial Corporation.  
Paul Viallard is partially funded through Inria with the associate team PACTOL and the exploratory action HYPE.

\bibliography{biblio}
\bibliographystyle{bibliostyle}

\appendix

\section{Proofs of the main results}

The two proofs rely on the cross-bounding argument described in
\Cref{subsec:data-dependent}. Recall that $\Hcalone$, $\Wcalone$, and
$\hyperpriorone$ are constructed from $\Stwo$ and are therefore
independent of $\Sone$. Symmetrically, $\Hcaltwo$, $\Wcaltwo$, and
$\hyperpriortwo$ are constructed from $\Sone$ and are independent of
$\Stwo$. We can thus condition on the subset used to construct the
voters and hyper-prior, apply the corresponding data-independent bound
to the other subset with confidence parameter $\delta/2$, and combine
the two resulting guarantees using a union bound and the joint
convexity of the binary KL divergence.

\subsection{Proof of~\Cref{thm:rivasplata_smv_DDBC}}
\label{app:proof_riv_DDBC}

\rivasplataSmvDDBC*

\begin{proof}
Recall that $\Hcalone$, $\Wcalone$, and $\hyperpriorone$ are constructed
from $\Stwo$ and are therefore independent of $\Sone$.
We can apply \Cref{thm:rivasplata_smv} to
$\Sone$ with confidence parameter $\delta/2$. We obtain
\begin{align*}
\Prob_{\substack{
\Sone\sim\Dcal^{\none}\\
\posteriorone\sim\hyperposterior_{\Sone}
}}
\Bigg[
&\kl\left(
\widehat R_{\Sone}(\MV_{\posteriorone})
\middle\|
\Risktrue(\MV_{\posteriorone})
\right) \leq
\frac{1}{\none}
\left[
\ln
\frac{
\hyperposterior_{\Sone}(\posteriorone)
}{
\hyperpriorone(\posteriorone)
}
+
\ln\frac{4\sqrt{\none}}{\delta}
\right]
\Bigg]
\geq 1-\frac{\delta}{2}.
\end{align*}

Adjoining the
draw $\posteriortwo\sim\hyperposterior_{\Stwo}$ gives
\begin{align}
\Prob_{\substack{
S\sim\Dcal^n\\
\posteriorone\sim\hyperposterior_{\Sone}\\
\posteriortwo\sim\hyperposterior_{\Stwo}
}}
\Bigg[
&\kl\left(
\widehat R_{\Sone}(\MV_{\posteriorone})
\middle\|
\Risktrue(\MV_{\posteriorone})
\right) \leq
\frac{1}{\none}
\left[
\ln
\frac{
\hyperposterior_{\Sone}(\posteriorone)
}{
\hyperpriorone(\posteriorone)
}
+
\ln\frac{4\sqrt{\none}}{\delta}
\right]
\Bigg]
\geq 1-\frac{\delta}{2}.
\label{eq:rivasplata-first-split}
\end{align}

Symmetrically, $\Hcaltwo$, $\Wcaltwo$, and $\hyperpriortwo$ are
constructed from $\Sone$ and are therefore independent of $\Stwo$, applying \Cref{thm:rivasplata_smv} to
$\Stwo$ with confidence parameter $\delta/2$ gives
\begin{align*}
\Prob_{\substack{
\Stwo\sim\Dcal^{\ntwo}\\
\posteriortwo\sim\hyperposterior_{\Stwo}
}}
\Bigg[
&\kl\left(
\widehat R_{\Stwo}(\MV_{\posteriortwo})
\middle\|
\Risktrue(\MV_{\posteriortwo})
\right)\leq
\frac{1}{\ntwo}
\left[
\ln
\frac{
\hyperposterior_{\Stwo}(\posteriortwo)
}{
\hyperpriortwo(\posteriortwo)
}
+
\ln\frac{4\sqrt{\ntwo}}{\delta}
\right]
\Bigg]
\geq 1-\frac{\delta}{2}.
\end{align*}

Adjoining the draw
$\posteriorone\sim\hyperposterior_{\Sone}$ yields
\begin{align}
\Prob_{\substack{
S\sim\Dcal^n\\
\posteriorone\sim\hyperposterior_{\Sone}\\
\posteriortwo\sim\hyperposterior_{\Stwo}
}}
\Bigg[
&\kl\left(
\widehat R_{\Stwo}(\MV_{\posteriortwo})
\middle\|
\Risktrue(\MV_{\posteriortwo})
\right)
\leq
\frac{1}{\ntwo}
\left[
\ln
\frac{
\hyperposterior_{\Stwo}(\posteriortwo)
}{
\hyperpriortwo(\posteriortwo)
}
+
\ln\frac{4\sqrt{\ntwo}}{\delta}
\right]
\Bigg]
\geq 1-\frac{\delta}{2}.
\label{eq:rivasplata-second-split}
\end{align}

By a union bound,
\Cref{eq:rivasplata-first-split,eq:rivasplata-second-split} hold
simultaneously with probability at least $1-\delta$. Therefore,

\begin{align*}
\Prob_{\substack{
S\sim\Dcal^n\\
\posteriorone\sim\hyperposterior_{\Sone}\\
\posteriortwo\sim\hyperposterior_{\Stwo}
}}
\!\left[
\begin{aligned}
\tau
\kl &\left(
\widehat{R}_{\Sone}(\MV_{\posteriorone})
\middle\| 
\Risktrue(\MV_{\posteriorone})
\right)
+
(1-\tau)
\kl\left(
\widehat{R}_{\Stwo}(\MV_{\posteriortwo})
\middle\|
\Risktrue(\MV_{\posteriortwo})
\right)
\\[1mm]
&\leq
\frac{\tau}{\none}
\left[
\ln
\frac{
\hyperposterior_{\Sone}(\posteriorone)
}{
\hyperpriorone(\posteriorone)
}
+
\ln\frac{4\sqrt{\none}}{\delta}
\right]\quad+
\frac{1-\tau}{\ntwo}
\left[
\ln
\frac{
\hyperposterior_{\Stwo}(\posteriortwo)
}{
\hyperpriortwo(\posteriortwo)
}
+
\ln\frac{4\sqrt{\ntwo}}{\delta}
\right]
\end{aligned}
\right]
\geq 1-\delta.
\end{align*}

Finally, by the convexity of the KL divergence,
\begin{align*}
&\tau
\kl\left(
\widehat R_{\Sone}(\MV_{\posteriorone})
\middle\|
\Risktrue(\MV_{\posteriorone})
\right)
+
(1-\tau)
\kl\left(
\widehat R_{\Stwo}(\MV_{\posteriortwo})
\middle\|
\Risktrue(\MV_{\posteriortwo})
\right)
\\
&\geq
\kl\left(
\tau\widehat R_{\Sone}(\MV_{\posteriorone})
+
(1-\tau)\widehat R_{\Stwo}(\MV_{\posteriortwo})
\middle\|
\tau\Risktrue(\MV_{\posteriorone})
+
(1-\tau)\Risktrue(\MV_{\posteriortwo})
\right).
\end{align*}
Substituting this inequality into the preceding probability statement
gives the claimed result.
\end{proof}

\subsection{Proof of~\Cref{thm:viallard_smv_DDBC}}
\label{app:proof_viallard_DDBC}

\viallardSmvDDBC*

\begin{proof}
Recall that $\Hcalone$, $\Wcalone$, and $\hyperpriorone$ are constructed
from $\Stwo$ and are therefore independent of $\Sone$.
We can apply \Cref{thm:viallard_smv} to
$\Sone$ with confidence parameter $\delta/2$. We obtain
\begin{equation*}
\Prob_{\substack{
\Sone\sim\Dcal^{\none}\\
\posteriorone\sim\hyperposterior_{\Sone}
}}
\!\left[
\kl\left(
\widehat R_{\Sone}(\MV_{\posteriorone})
\middle\|
\Risktrue(\MV_{\posteriorone})
\right) \leq
\frac{1}{\none}
\left[
\frac{2\lambda-1}{\lambda-1}
\ln\frac{4}{\delta}
+
\DivRen\left(
\hyperposterior_{\Sone}
\middle\|
\hyperpriorone
\right)
+
\ln(2\sqrt{\none})
\right]
\right] \geq 1-\frac{\delta}{2}.
\end{equation*}

Adjoining the
draw $\posteriortwo\sim\hyperposterior_{\Stwo}$ gives
\begin{equation}
\Prob_{\substack{
S\sim\Dcal^n\\
\posteriorone\sim\hyperposterior_{\Sone}\\
\posteriortwo\sim\hyperposterior_{\Stwo}
}}
\!\left[
\begin{aligned}
\kl\left(
\widehat R_{\Sone}(\MV_{\posteriorone})
\middle\|
\Risktrue(\MV_{\posteriorone})
\right)\leq
\frac{1}{\none}
\left[
\frac{2\lambda-1}{\lambda-1}
\ln\frac{4}{\delta}
+
\DivRen\left(
\hyperposterior_{\Sone}
\middle\|
\hyperpriorone
\right)
+
\ln(2\sqrt{\none})
\right]
\end{aligned}
\right]
\geq 1-\frac{\delta}{2}.
\label{eq:viallard-first-split}
\end{equation}

Symmetrically, $\Hcaltwo$, $\Wcaltwo$, and $\hyperpriortwo$ are
constructed from $\Sone$ and are therefore independent of $\Stwo$, applying \Cref{thm:viallard_smv} to $\Stwo$
with confidence parameter $\delta/2$ gives
\begin{equation*}
\Prob_{\substack{
\Stwo\sim\Dcal^{\ntwo}\\
\posteriortwo\sim\hyperposterior_{\Stwo}
}}
\!\left[
\kl\left(
\widehat R_{\Stwo}(\MV_{\posteriortwo})
\middle\|
\Risktrue(\MV_{\posteriortwo})
\right) \leq
\frac{1}{\ntwo}
\left[
\frac{2\lambda-1}{\lambda-1}
\ln\frac{4}{\delta}
+
\DivRen\left(
\hyperposterior_{\Stwo}
\middle\|
\hyperpriortwo
\right)
+
\ln(2\sqrt{\ntwo})
\right]
\right]
\geq 1-\frac{\delta}{2}.
\end{equation*}

Adjoining the draw
$\posteriorone\sim\hyperposterior_{\Sone}$ yields
\begin{equation}
\Prob_{\substack{
S\sim\Dcal^n\\
\posteriorone\sim\hyperposterior_{\Sone}\\
\posteriortwo\sim\hyperposterior_{\Stwo}
}}
\!\left[
\begin{aligned}
\kl\left(
\widehat R_{\Stwo}(\MV_{\posteriortwo})
\middle\|
\Risktrue(\MV_{\posteriortwo})
\right)
\leq
\frac{1}{\ntwo}
\left[
\frac{2\lambda-1}{\lambda-1}
\ln\frac{4}{\delta}
+
\DivRen\left(
\hyperposterior_{\Stwo}
\middle\|
\hyperpriortwo
\right)
+
\ln(2\sqrt{\ntwo})
\right]
\end{aligned}
\right]
\geq 1-\frac{\delta}{2}.
\label{eq:viallard-second-split}
\end{equation}

By a union bound,
\Cref{eq:viallard-first-split,eq:viallard-second-split} hold
simultaneously with probability at least $1-\delta$. Therefore,
\begin{equation*}
\Prob_{\substack{
S\sim\Dcal^n\\
\posteriorone\sim\hyperposterior_{\Sone}\\
\posteriortwo\sim\hyperposterior_{\Stwo}
}}
\!\left[
\begin{aligned}
&\tau
\kl\left(
\widehat R_{\Sone}(\MV_{\posteriorone})
\middle\|
\Risktrue(\MV_{\posteriorone})
\right) +
(1-\tau)
\kl\left(
\widehat R_{\Stwo}(\MV_{\posteriortwo})
\middle\|
\Risktrue(\MV_{\posteriortwo})
\right)
\\[1mm]
&\leq
\frac{\tau}{\none}
\left[
\frac{2\lambda-1}{\lambda-1}
\ln\frac{4}{\delta}
+
\DivRen\left(
\hyperposterior_{\Sone}
\middle\|
\hyperpriorone
\right)
+
\ln(2\sqrt{\none})
\right]
\\
&\quad+
\frac{1-\tau}{\ntwo}
\left[
\frac{2\lambda-1}{\lambda-1}
\ln\frac{4}{\delta}
+
\DivRen\left(
\hyperposterior_{\Stwo}
\middle\|
\hyperpriortwo
\right)
+
\ln(2\sqrt{\ntwo})
\right]
\end{aligned}
\right]
\geq 1-\delta.
\end{equation*}

Finally, by the convexity of the KL divergence,
\begin{align*}
&\tau
\kl\left(
\widehat R_{\Sone}(\MV_{\posteriorone})
\middle\|
\Risktrue(\MV_{\posteriorone})
\right)
+
(1-\tau)
\kl\left(
\widehat R_{\Stwo}(\MV_{\posteriortwo})
\middle\|
\Risktrue(\MV_{\posteriortwo})
\right)
\\
&\geq
\kl\left(
\tau\widehat R_{\Sone}(\MV_{\posteriorone})
+
(1-\tau)\widehat R_{\Stwo}(\MV_{\posteriortwo})
\middle\|
\tau\Risktrue(\MV_{\posteriorone})
+
(1-\tau)\Risktrue(\MV_{\posteriortwo})
\right).
\end{align*}
Substituting this inequality into the preceding probability statement
gives the claimed result.
\end{proof}
\section{Dirichlet Specializations and Optimization Objectives}
\label{app:dir specialization}
\label{app:dirichlet_objectives}

We specialize our bounds to Dirichlet hyper-priors and
hyper-posteriors, and derive the corresponding self-bounding
optimization objectives for both the data-independent and
data-dependent settings.

\subsection{Preliminaries on Dirichlet Distributions}
\label{app:dirichlet_preliminaries}

Let $\Wcal=\{\brho\in[0,1]^{|\!\Hcal\!|}:\|\brho\|_1=1\}$. For any
$\balpha\in\Rbb_{>0}^{|\!\Hcal\!|}$, the density of $\Dir(\balpha)$ is
\[
\Dir(\brho\mid\balpha)
=
\frac{1}{B(\balpha)}
\prod_{j=1}^{|\!\Hcal\!|}\rho_j^{\alpha_j-1},
\qquad
B(\balpha)
=
\frac{\prod_{j=1}^{|\!\Hcal\!|}\Gamma(\alpha_j)}
{\Gamma\left(\sum_{j=1}^{|\!\Hcal\!|}\alpha_j\right)}.
\]

Let $\hyperprior=\Dir(\bbeta)$ and
$\hyperposterior_S=\Dir(\balpha)$. Their pointwise log-density ratio is
\begin{equation}
\ln\frac{\hyperposterior_S(\brho)}{\hyperprior(\brho)}
=
\ln\frac{B(\bbeta)}{B(\balpha)}
+
\sum_{j=1}^{|\!\Hcal\!|}(\alpha_j-\beta_j)\ln(\rho_j).
\label{eq:KL_dis_Dir}
\end{equation}

We refer to \cite{gil2013renyi} for the expression of the Rényi divergence for the Dirichlet distribution. With $\prior= \Dir (\bbeta)$ the prior distribution and $\post_S= \Dir (\balpha)$ we have that for any $\lambda>1$
   \begin{equation}\label{eq:Rényi_dir}
       \DivRen(\post_S \| \prior) = \ln\frac{B(\bbeta)}{B(\balpha)} + \frac{1}{\lambda-1}\ln\frac{B(\lambda\balpha+(1-\lambda)\bbeta)}{B(\balpha)}
   \end{equation}
This expression is finite provided that
\[
\lambda\alpha_j+(1-\lambda)\beta_j>0,
\qquad j=1,\ldots,|\!\Hcal\!|,
\]
or, equivalently,
\[
\alpha_j>\frac{\lambda-1}{\lambda}\beta_j,
\qquad j=1,\ldots,|\!\Hcal\!|.
\]

Finally, we define
\[
\kl^{-1}(q\|\epsilon)
\triangleq
\max\left\{p\in[q,1]:\kl(q\|p)\leq\epsilon\right\}.
\]
Thus, $\kl(q\|p)\leq\epsilon$ implies
$p\leq\kl^{-1}(q\|\epsilon)$.

\subsection{Data-Independent Setting}
\label{app:dirichlet_data_independent}

\subsubsection{Pointwise Density-Ratio Bound}

\begin{corollary}[
\Cref{thm:rivasplata_smv} with Dirichlet distributions
]\label{cor:rivasplata_dir}
For any distribution $\Dcal$, finite hypothesis set $\Hcal$,
majority-vote weight space
$\Wcal=\{\brho\in[0,1]^{|\!\Hcal\!|}:\|\brho\|_1=1\}$,
hyper-prior distribution $\hyperprior=\Dir(\bbeta)$,
hyper-posterior distribution
$\hyperposterior_S=\Dir(\balpha)$,
loss $\ell:\widehat{\Ycal}\times\Ycal\to[0,1]$,
algorithm
$A:(\Xcal{\times}\Ycal)^n\times\Mcal^*(\Wcal)
\rightarrow\Mcal(\Wcal)$,
and $\delta\in(0,1]$,
with $\balpha,\bbeta\in\Rbb_{>0}^{|\!\Hcal\!|}$, we have
\begin{equation*}
\Prob_{\substack{S\sim\Dcal^n\\\brho\sim\Dir(\balpha)}}
\left[
\kl\left(\Riskemp(\MV_{\brho})\middle\|\Risktrue(\MV_{\brho})\right)
\leq
\frac{1}{n}
\left(
\ln\frac{B(\bbeta)}{B(\balpha)}
+
\sum_{j=1}^{|\!\Hcal\!|}(\alpha_j-\beta_j)\ln(\rho_j)
+
\ln\frac{2\sqrt n}{\delta}
\right)
\right]
\geq 1-\delta.
\end{equation*}
where $\hyperposterior_{S } \triangleq \A(S , \hyperprior)$.
\end{corollary}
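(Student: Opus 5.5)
This corollary is a direct instantiation of \Cref{thm:rivasplata_smv}, so the plan is to invoke that theorem with the Dirichlet pair $\hyperprior=\Dir(\bbeta)$ and $\hyperposterior_S=\Dir(\balpha)$ on the simplex $\Wcal$. The only model-specific work is to rewrite the pointwise penalty $\ln\frac{\hyperposterior_S(\brho)}{\hyperprior(\brho)}$ in closed form.

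\textbf{Step 1: check the hypotheses of \Cref{thm:rivasplata_smv}.} We need $\hyperprior\in\Mcal^*(\Wcal)$. Since every $\beta_j>0$, the Dirichlet density $\frac{1}{B(\bbeta)}\prod_j\rho_j^{\beta_j-1}$ is strictly positive on the relative interior of the simplex. The boundary has Lebesgue measure zero with respect to the $(|\Hcal|-1)$-dimensional base measure. Densities are taken with respect to that same measure on the simplex for both distributions, so the ratio in \Cref{thm:rivasplata} is a Radon--Nikodym derivative. The hyper-posterior $\Dir(\balpha)$ is the output of the algorithm $A$ for the given $S$, and it lies in $\Mcal(\Wcal)$. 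Applying \Cref{thm:rivasplata_smv} verbatim then yields the stated probability inequality, with $\ln\frac{\hyperposterior_S(\brho)}{\hyperprior(\brho)}$ still in place of the explicit expression.

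\textbf{Step 2: compute the log-density ratio.} For $\brho$ in the interior of $\Wcal$, taking the quotient of the two Dirichlet densities gives
\[
\frac{\Dir(\brho\mid\balpha)}{\Dir(\brho\mid\bbeta)}=\frac{B(\bbeta)}{B(\balpha)}\prod_{j}\rho_j^{\alpha_j-\beta_j}.
\]
Taking logarithms recovers \Cref{eq:KL_dis_Dir}. Substituting this into the event of Step~1 gives exactly the claimed inequality, because the event is literally the same set of pairs $(S,\brho)$.

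\textbf{Step 3: handle the boundary.} The only mildly delicate point is that $\ln\rho_j$ is undefined when some $\rho_j=0$. I would note that under $\brho\sim\Dir(\balpha)$ with $\balpha>0$, the boundary has probability zero. Modifying the event on a null set therefore does not change its probability, so the bound holds almost surely as written. Nothing else is an obstacle: the exponential-moment argument is inherited unchanged from \Cref{thm:rivasplata_smv}.
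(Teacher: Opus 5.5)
Your proposal is correct and follows the paper's route. The paper's proof writes out the two Dirichlet log-densities, subtracts them to get $\sum_{j}(\alpha_j-\beta_j)\ln\rho_j+\ln B(\bbeta)-\ln B(\balpha)$, and substitutes this into \Cref{thm:rivasplata_smv}; your remarks on the strict positivity of $\Dir(\bbeta)$ and on the simplex boundary being a $\Dir(\balpha)$-null set are sound details that the paper leaves implicit.
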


\begin{proof}
Recall that the Dirichlet density is defined as
\[
\Dir(\brho \mid \balpha) = \frac{1}{B(\balpha)} \prod_{j=1}^{|\!\Hcal\!|} \rho_j^{\alpha_j - 1}.
\]

Then, for any $\brho \sim \post_S$ we obtain the following decomposition:
\begin{equation}\label{eq:KL_dis_Dir_pr}
\begin{aligned}
\ln \frac{\post_S(\brho)}{\prior(\brho)} 
&= \ln \post_S(\brho) - \ln \prior(\brho) \\
&= \Big[-\ln B(\balpha) + \sum_{j=1}^{|\!\Hcal\!|} (\alpha_j - 1) \ln \rho_j \Big] 
   - \Big[-\ln B(\bbeta) + \sum_{j=1}^{|\!\Hcal\!|} (\beta_j - 1) \ln \rho_j \Big] \\
&= \sum_{j=1}^{|\!\Hcal\!|} (\alpha_j - \beta_j) \ln \rho_j + \big( \ln B(\bbeta) - \ln B(\balpha) \big).
\end{aligned}
\end{equation}
Substituting \Cref{eq:KL_dis_Dir_pr} into~\Cref{thm:rivasplata_smv} yields the stated result.
\end{proof}

The corresponding self-bounding optimization objective is
\begin{equation}
\kl^{-1}\left(
\Riskemp(\MV_{\brho})
\;\middle\|\;
\frac{1}{n}
\left[
\ln\frac{B(\bbeta)}{B(\balpha)}
+
\sum_{j=1}^{|\!\Hcal\!|}(\alpha_j-\beta_j)\ln(\rho_j)
+
\ln\frac{2\sqrt n}{\delta}
\right]
\right).
\label{eq:obj_rivasplata_dir}
\end{equation}

\subsubsection{Rényi Divergence-Based Bound}

\begin{corollary}[
\Cref{thm:viallard_smv} with Dirichlet distributions
]
\label{cor:viallard_smv_dirichlet}
For any distribution $\Dcal$, finite hypothesis set $\Hcal$,
majority-vote weight space
$\Wcal=\{\brho\in[0,1]^{|\!\Hcal\!|}:\|\brho\|_1=1\}$,
hyper-prior distribution $\hyperprior=\Dir(\bbeta)$,
hyper-posterior distribution
$\hyperposterior_S=\Dir(\balpha)$,
loss $\ell:\widehat{\Ycal}\times\Ycal\to[0,1]$,
algorithm
$A:(\Xcal{\times}\Ycal)^n\times\Mcal^*(\Wcal)
\rightarrow\Mcal(\Wcal)$,
$\lambda>1$, and $\delta\in(0,1]$,
with $\balpha,\bbeta\in\Rbb_{>0}^{|\!\Hcal\!|}$, we have
\begin{equation*}
\Prob_{\substack{S\sim\Dcal^n\\\brho\sim\Dir(\balpha)}}
\left[ 
\begin{aligned}
\kl &\left(\Riskemp(\MV_{\brho})\middle\|\Risktrue(\MV_{\brho})\right)
\leq \\
& \frac{1}{n}
\left(
\frac{2\lambda-1}{\lambda-1}\ln\frac{2}{\delta}
+
\ln\frac{B(\bbeta)}{B(\balpha)} 
+   \frac{1}{\lambda-1}
\ln\frac{B\left(\lambda\balpha+(1-\lambda)\bbeta\right)}
{B(\balpha)}
+
\ln(2\sqrt n)
\right)
\end{aligned} \right]
\geq 1-\delta.
\end{equation*}
where $\hyperposterior_{S } \triangleq \A(S , \hyperprior)$.
\end{corollary}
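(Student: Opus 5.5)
The plan is to derive this corollary directly from \Cref{thm:viallard_smv}, exactly as \Cref{cor:rivasplata_dir} was derived from \Cref{thm:rivasplata_smv}. With $\Wcal$ the simplex, $\hyperprior=\Dir(\bbeta)\in\Mcal^*(\Wcal)$ (the Dirichlet density is strictly positive on the interior of the simplex, which carries full measure) and $\hyperposterior_S=\Dir(\balpha)=A(S,\hyperprior)$, all hypotheses of \Cref{thm:viallard_smv} hold. It then gives, with probability at least $1-\delta$ over $S\sim\Dcal^n$ and $\brho\sim\Dir(\balpha)$,
\[
\kl\bigl(\Riskemp(\MV_{\brho})\,\|\,\Risktrue(\MV_{\brho})\bigr)\le\frac1n\Bigl(\tfrac{2\lambda-1}{\lambda-1}\ln\tfrac2\delta+\DivRen(\Dir(\balpha)\|\Dir(\bbeta))+\ln(2\sqrt n)\Bigr).
\]
The only remaining work is to substitute the closed form of the Rényi divergence given in \Cref{eq:Rényi_dir}.

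For completeness, I would verify \Cref{eq:Rényi_dir} rather than only cite \citet{gil2013renyi}. Write the density ratio $\Dir(\brho|\balpha)/\Dir(\brho|\bbeta)=\frac{B(\bbeta)}{B(\balpha)}\prod_j\rho_j^{\alpha_j-\beta_j}$. Then
\[
\Esp_{\brho\sim\Dir(\bbeta)}\Bigl[\tfrac{\Dir(\brho|\balpha)}{\Dir(\brho|\bbeta)}\Bigr]^\lambda=\frac{B(\bbeta)^{\lambda-1}}{B(\balpha)^\lambda}\int_\Wcal\prod_j\rho_j^{\lambda\alpha_j+(1-\lambda)\beta_j-1}d\brho=\frac{B(\bbeta)^{\lambda-1}B(\lambda\balpha+(1-\lambda)\bbeta)}{B(\balpha)^{\lambda}}.
\]
The last equality uses the Dirichlet normalizing integral. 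That integral is finite exactly when $\lambda\alpha_j+(1-\lambda)\beta_j>0$ for all $j$.

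Taking $\frac1{\lambda-1}\ln$ of this quantity gives $\ln\frac{B(\bbeta)}{B(\balpha)}+\frac1{\lambda-1}\ln\frac{B(\lambda\balpha+(1-\lambda)\bbeta)}{B(\balpha)}$, which is exactly the expression in \Cref{eq:Rényi_dir}. Plugging this into the displayed inequality yields the statement.

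The main obstacle is the finiteness condition. If $\alpha_j\le\frac{\lambda-1}{\lambda}\beta_j$ for some $j$, the divergence is $+\infty$. In that case the bound holds trivially, provided we read the right-hand side as $+\infty$. Otherwise the computation above is valid. So no separate case analysis is needed beyond noting this convention.
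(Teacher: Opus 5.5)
Your proposal is correct and follows the same route as the paper, whose proof simply substitutes the closed-form Rényi divergence between Dirichlet distributions into \Cref{thm:viallard_smv}. The extras are sound: you verify that closed form directly through the Dirichlet normalizing integral, where the paper only cites \citet{gil2013renyi}, and you note that the right-hand side should be read as $+\infty$ when $\lambda\alpha_j+(1-\lambda)\beta_j\le 0$ for some $j$.
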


\begin{proof}
Substituting the Rényi divergence of \Cref{eq:Rényi_dir} into
\Cref{thm:viallard_smv} gives the result.
\end{proof}

The corresponding self-bounding optimization objective is
\begin{equation}
\kl^{-1}\left(
\Riskemp(\MV_{\brho})
\;\middle\|\;
\frac{1}{n}
\left[
\frac{2\lambda-1}{\lambda-1}\ln\frac{2}{\delta}
+
\ln\frac{B(\bbeta)}{B(\balpha)}
+
\frac{1}{\lambda-1}
\ln\frac{B\left(\lambda\balpha+(1-\lambda)\bbeta\right)}
{B(\balpha)}
+
\ln(2\sqrt n)
\right]
\right).
\label{eq:obj_viallard_dir}
\end{equation}

\subsection{Data-Dependent Setting}
\label{app:dirichlet_data_dependent}

We consider $\none=\ntwo=n/2$ and $\tau=1/2$, assuming that $n$ is
even. We instantiate the hyper-priors and hyper-posteriors as
\[
\hyperpriorone=\Dir(\bbeta^{(1)}),
\qquad
\hyperposterior_{\Sone}=\Dir(\balpha^{(1)}),
\qquad
\hyperpriortwo=\Dir(\bbeta^{(2)}),
\qquad
\hyperposterior_{\Stwo}=\Dir(\balpha^{(2)}),
\]
where
\[
\balpha^{(1)},\bbeta^{(1)}
\in\Rbb_{>0}^{|\!\Hcalone\!|},
\qquad
\balpha^{(2)},\bbeta^{(2)}
\in\Rbb_{>0}^{|\!\Hcaltwo\!|}.
\]

\subsubsection{Pointwise Density-Ratio Bound}

\begin{corollary}[
\Cref{thm:rivasplata_smv_DDBC} with Dirichlet distributions
]
\label{cor:rivasplata_smv_DDBC_dirichlet}
For any distribution $\Dcal$, finite hypothesis sets $\Hcalone$ and $\Hcaltwo$, majority votes weight spaces $\Wcalone = \{\brho\in[0,1]^{|\!\Hcalone\!|}:\|\brho\|_1=1\}$, $\Wcaltwo = \{\brho\in[0,1]^{|\!\Hcaltwo\!|}:\|\brho\|_1=1\}$, hyper-prior distributions $\hyperpriorone=\Dir(\bbeta^{(1)})$, $\hyperpriortwo=\Dir(\bbeta^{(2)})$, hyper-posterior $\hyperposterior_{\Sone}=\Dir(\balpha^{(1)})$,  $\hyperposterior_{\Stwo}=\Dir(\balpha^{(2)})$, loss $\ell:\widehat{\Ycal}\times\Ycal\to[0,1]$, algorithms $\Aone:(\Xcal{\times}\Ycal)^\none \times \Mcal^*(\Wcalone )\rightarrow \Mcal(\Wcalone )$ and $\Atwo :(\Xcal{\times}\Ycal)^\ntwo \times \Mcal^*(\Wcaltwo)\rightarrow \Mcal(\Wcaltwo)$, and $\delta\in (0,1]$, with $\balpha^{(1)},\bbeta^{(1)}\in\Rbb_{>0}^{|\!\Hcalone\!|}$, $\balpha^{(2)},\bbeta^{(2)} \in\Rbb_{>0}^{|\!\Hcaltwo\!|}$, $\none=\ntwo=n/2$ and $\tau=1/2$, we have 

\begin{equation*}
\Prob_{\substack{
S\sim\Dcal^n\\
\posteriorone\sim\Dir(\balpha^{(1)})\\
\posteriortwo\sim\Dir(\balpha^{(2)})
}}
\left[
\begin{aligned}
\kl\left(
\frac12\widehat R_{\Sone}(\MV_{\posteriorone})
+
\frac12\widehat R_{\Stwo}(\MV_{\posteriortwo})
\middle\| 
\frac12\Risktrue(\MV_{\posteriorone})
+
\frac12\Risktrue(\MV_{\posteriortwo})
\right)&
\\
\leq
\frac{1}{n}
\Bigg[
\ln\frac{B(\bbeta^{(1)})B(\bbeta^{(2)})}
{B(\balpha^{(1)})B(\balpha^{(2)})}
+
\sum_{j=1}^{|\!\Hcalone\!|}
\left(\alpha_j^{(1)}-\beta_j^{(1)}\right)
\ln(\rho_1)_j +& \\
\sum_{j=1}^{|\!\Hcaltwo\!|}
\left(\alpha_j^{(2)}-\beta_j^{(2)}\right)
\ln(\rho_2)_j
+&
\ln\frac{8n}{\delta^2}
\Bigg]
\end{aligned}
\right]
\geq 1-\delta.
\end{equation*}
where $\hyperposterior_{\Sone } \triangleq \Aone(\Sone , \hyperpriorone)$ and $\hyperposterior_{\Stwo } \triangleq \Atwo(\Stwo , \hyperpriortwo)$.
\end{corollary}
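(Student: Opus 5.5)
The plan is to specialize \Cref{thm:rivasplata_smv_DDBC} with $\none=\ntwo=n/2$ and $\tau=1/2$, and then substitute the closed-form Dirichlet log-density ratio of \Cref{eq:KL_dis_Dir} for each split. Nothing probabilistic needs to be redone: the cross-bounding independence, the union bound and the joint convexity of $\kl$ are already contained in \Cref{thm:rivasplata_smv_DDBC}. The corollary should therefore follow from an algebraic simplification of its right-hand side.

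\textbf{Prefactors and split penalties.} With $\tau=1/2$ and $\none=\ntwo=n/2$, both prefactors become
\[
\frac{\tau}{\none}=\frac{1-\tau}{\ntwo}=\frac{1}{n}.
\]
The right-hand side of \Cref{thm:rivasplata_smv_DDBC} is then $\frac1n$ times the sum of the two bracketed penalties. For each $k\in\{1,2\}$, I would apply the decomposition \Cref{eq:KL_dis_Dir_pr} (already proved in \Cref{cor:rivasplata_dir}) with $\hyperposterior_{S_k}=\Dir(\balpha^{(k)})$ and $\hyperprior_k=\Dir(\bbeta^{(k)})$. This gives
\[
\ln\frac{B(\bbeta^{(k)})}{B(\balpha^{(k)})}+\sum_{j=1}^{|\Hcal_k|}\bigl(\alpha^{(k)}_j-\beta^{(k)}_j\bigr)\ln(\rho_k)_j .
\]
Adding the two log-Beta ratios yields the single term $\ln\frac{B(\bbeta^{(1)})B(\bbeta^{(2)})}{B(\balpha^{(1)})B(\balpha^{(2)})}$, and the two sums over voters appear unchanged.

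\textbf{Confidence terms.} The remaining step combines the two confidence terms:
\[
\ln\frac{4\sqrt{n/2}}{\delta}+\ln\frac{4\sqrt{n/2}}{\delta}=\ln\frac{16\,(n/2)}{\delta^2}=\ln\frac{8n}{\delta^2},
\]
which is exactly the constant in the statement. Since the left-hand $\kl$ term of \Cref{thm:rivasplata_smv_DDBC} with $\tau=1/2$ is literally the left-hand side of the corollary, the two probability statements coincide.

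The only point needing care, and the main (mild) obstacle, is well-definedness. The Dirichlet distributions must have strictly positive parameters, so that the hyper-priors lie in $\Mcal^*(\Wcal_k)$ and the densities exist. Also, $\ln(\rho_k)_j$ must be finite, which holds almost surely since $(\rho_k)_j>0$ with probability one under $\Dir(\balpha^{(k)})$. The theorem is stated with densities $\hyperposterior_S(\brho)$ and $\hyperprior(\brho)$ with respect to Lebesgue measure on the simplex. Because both Dirichlet laws share that reference measure, the density ratio is the correct Radon–Nikodym derivative, so the substitution is legitimate.
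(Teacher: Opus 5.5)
Your proposal is correct and follows the paper's proof. The paper likewise substitutes the Dirichlet log-density ratio \Cref{eq:KL_dis_Dir} for each split into \Cref{thm:rivasplata_smv_DDBC}, uses $\tau/\none=(1-\tau)/\ntwo=1/n$, and combines $2\ln\frac{4\sqrt{n/2}}{\delta}=\ln\frac{8n}{\delta^2}$. Your well-definedness remarks (positive parameters, almost-sure finiteness of $\ln(\rho_k)_j$, and the shared reference measure for the density ratio) are not spelled out in the paper, but they are correct and harmless.
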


\begin{proof}
Apply \Cref{eq:KL_dis_Dir} to the pairs
$(\hyperposterior_{\Sone},\hyperpriorone)$ and
$(\hyperposterior_{\Stwo},\hyperpriortwo)$ in
\Cref{thm:rivasplata_smv_DDBC}. Since
\[
\frac{\tau}{\none}
=
\frac{1-\tau}{\ntwo}
=
\frac{1}{n}\qquad \text{and}
\qquad
2\ln\frac{4\sqrt{n/2}}{\delta}
=
\ln\frac{8n}{\delta^2},
\]
the result follows.
\end{proof}

The corresponding self-bounding optimization objective is
\begin{equation}
\begin{aligned}
\kl^{-1}\Bigg(
&\frac12\widehat R_{\Sone}(\MV_{\posteriorone})
+
\frac12\widehat R_{\Stwo}(\MV_{\posteriortwo})
\;\Bigg\|\;
\frac{1}{n}
\Bigg[
\ln\frac{B(\bbeta^{(1)})B(\bbeta^{(2)})}
{B(\balpha^{(1)})B(\balpha^{(2)})}
\\
&\qquad+
\sum_{j=1}^{|\!\Hcalone\!|}
\left(\alpha_j^{(1)}-\beta_j^{(1)}\right)
\ln(\rho)_j
+
\sum_{j=1}^{|\!\Hcaltwo\!|}
\left(\alpha_j^{(2)}-\beta_j^{(2)}\right)
\ln(\rho)_j
+
\ln\frac{8n}{\delta^2}
\Bigg]
\Bigg).
\end{aligned}
\label{eq:obj_rivasplata_DDBC_dir}
\end{equation}

\subsubsection{Rényi Divergence-Based Bound}

\begin{corollary}[
\Cref{thm:viallard_smv_DDBC} with Dirichlet distributions
]
\label{cor:viallard_smv_DDBC_dirichlet}
For any distribution $\Dcal$, finite hypothesis sets $\Hcalone$ and $\Hcaltwo$, majority votes weight spaces $\Wcalone = \{\brho\in[0,1]^{|\!\Hcalone\!|}:\|\brho\|_1=1\}$, $\Wcaltwo = \{\brho\in[0,1]^{|\!\Hcaltwo\!|}:\|\brho\|_1=1\}$, hyper-prior distributions $\hyperpriorone=\Dir(\bbeta^{(1)})$, $\hyperpriortwo=\Dir(\bbeta^{(2)})$, loss $\ell:\widehat{\Ycal}\times\Ycal\to[0,1]$, algorithms $\Aone:(\Xcal{\times}\Ycal)^\none \times \Mcal^*(\Wcalone )\rightarrow \Mcal(\Wcalone )$ and $\Atwo :(\Xcal{\times}\Ycal)^\ntwo \times \Mcal^*(\Wcaltwo)\rightarrow \Mcal(\Wcaltwo)$, $\lambda>1$ and $\delta\in (0,1]$, with $\balpha^{(1)},\bbeta^{(1)}\in\Rbb_{>0}^{|\!\Hcalone\!|}$, $\balpha^{(2)},\bbeta^{(2)} \in\Rbb_{>0}^{|\!\Hcaltwo\!|}$, $\none=\ntwo=n/2$ and $\tau=1/2$, we have 
\begin{equation*}
\Prob_{\substack{
S\sim\Dcal^n\\
\posteriorone\sim\Dir(\balpha^{(1)})\\
\posteriortwo\sim\Dir(\balpha^{(2)})
}}
\left[
\begin{aligned}
&\kl\left(
\frac12\widehat R_{\Sone}(\MV_{\posteriorone})
+
\frac12\widehat R_{\Stwo}(\MV_{\posteriortwo})
\middle\|
\frac12\Risktrue(\MV_{\posteriorone})
+
\frac12\Risktrue(\MV_{\posteriortwo})
\right)
\\
&\leq
\frac{1}{n}
\Bigg[
2\frac{2\lambda-1}{\lambda-1}\ln\frac{4}{\delta}
+
\ln(2n)
+
\ln\frac{B(\bbeta^{(1)})B(\bbeta^{(2)})}
{B(\balpha^{(1)})B(\balpha^{(2)})}
\\
&\qquad+
\frac{1}{\lambda-1}
\ln
\frac{
B\left(\lambda\balpha^{(1)}+(1-\lambda)\bbeta^{(1)}\right)
B\left(\lambda\balpha^{(2)}+(1-\lambda)\bbeta^{(2)}\right)
}{
B(\balpha^{(1)})B(\balpha^{(2)})
}
\Bigg]
\end{aligned}
\right]
\geq 1-\delta.
\end{equation*}
where $\hyperposterior_{\Sone } \triangleq \Aone(\Sone , \hyperpriorone)$ and $\hyperposterior_{\Stwo } \triangleq \Atwo(\Stwo , \hyperpriortwo)$.
\end{corollary}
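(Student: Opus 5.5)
The statement to prove is the last corollary (\Cref{thm:viallard_smv_DDBC} specialized to Dirichlet distributions). I will obtain it by direct substitution into \Cref{thm:viallard_smv_DDBC}, instantiated with $\none=\ntwo=n/2$ and $\tau=1/2$, mirroring the proof of \Cref{cor:rivasplata_smv_DDBC_dirichlet}. Nothing probabilistic needs to be redone. The event in \Cref{thm:viallard_smv_DDBC} already has probability at least $1-\delta$. It therefore suffices to show that its right-hand side equals the displayed right-hand side of the corollary, identically in $S$, $\posteriorone$ and $\posteriortwo$.

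\textbf{Step 1: the weights.} With $\tau=1/2$ and $\none=\ntwo=n/2$, both weights satisfy $\frac{\tau}{\none}=\frac{1-\tau}{\ntwo}=\frac1n$. The two bracketed terms can therefore be summed under a single factor $\frac1n$.

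\textbf{Step 2: the constant terms.} The confidence terms give $2\,\frac{2\lambda-1}{\lambda-1}\ln\frac4\delta$. The logarithmic terms give $2\ln(2\sqrt{n/2})=\ln(4\cdot n/2)=\ln(2n)$, which matches the stated $\ln(2n)$.

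\textbf{Step 3: the divergences.} I will plug the closed-form Rényi divergence of \Cref{eq:Rényi_dir}, taken from \citet{gil2013renyi}, into each of the pairs $(\Dir(\balpha^{(1)}),\Dir(\bbeta^{(1)}))$ and $(\Dir(\balpha^{(2)}),\Dir(\bbeta^{(2)}))$. Adding the two expressions and merging logarithms yields
\[
\ln\frac{B(\bbeta^{(1)})B(\bbeta^{(2)})}{B(\balpha^{(1)})B(\balpha^{(2)})}
+\frac{1}{\lambda-1}\ln\frac{B(\lambda\balpha^{(1)}+(1-\lambda)\bbeta^{(1)})\,B(\lambda\balpha^{(2)}+(1-\lambda)\bbeta^{(2)})}{B(\balpha^{(1)})B(\balpha^{(2)})},
\]
which is exactly the remaining part of the claimed bound.

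The only real subtlety is the validity of \Cref{eq:Rényi_dir}. The expression is finite only when $\lambda\alpha^{(k)}_j+(1-\lambda)\beta^{(k)}_j>0$ for all $j$ and $k\in\{1,2\}$. If this condition fails, the Rényi divergence is $+\infty$ and the bound holds trivially. I would state this case explicitly so the corollary remains true for all positive $\balpha,\bbeta$. I will also note that $\Hcalone$, $\Hcaltwo$ and the hyper-priors are built from the complementary splits, as \Cref{thm:viallard_smv_DDBC} requires, so that no additional assumption enters.
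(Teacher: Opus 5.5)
Your proposal is correct and follows the paper's own argument: substitute the Dirichlet closed form of the Rényi divergence (from \citet{gil2013renyi}) for both pairs into \Cref{thm:viallard_smv_DDBC}, then use $\tau/\none=(1-\tau)/\ntwo=1/n$ and $2\ln(2\sqrt{n/2})=\ln(2n)$. Your remark on the finiteness condition $\lambda\alpha^{(k)}_j+(1-\lambda)\beta^{(k)}_j>0$ is extra care the paper's proof omits, but when the condition fails the closed-form expression does not itself evaluate to $+\infty$ (the Gamma function is finite at negative non-integers), so reading the corollary's right-hand side as $+\infty$ in that case is a convention you must state explicitly rather than something the formula gives you.
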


\begin{proof}
Apply \Cref{eq:Rényi_dir} to the pairs
$(\hyperposterior_{\Sone},\hyperpriorone)$ and
$(\hyperposterior_{\Stwo},\hyperpriortwo)$ in
\Cref{thm:viallard_smv_DDBC}. Since
\[
\frac{\tau}{\none}
=
\frac{1-\tau}{\ntwo}
=
\frac{1}{n},
\qquad
\ln(2\sqrt{\none})+\ln(2\sqrt{\ntwo})
=
\ln(2n),
\]
the result follows.
\end{proof}

The corresponding self-bounding optimization objective is
\begin{equation}
\begin{aligned}
\kl^{-1}\Bigg(
&\frac12\widehat R_{\Sone}(\MV_{\posteriorone})
+
\frac12\widehat R_{\Stwo}(\MV_{\posteriortwo})
\;\Bigg\|\;
\frac{1}{n}
\Bigg[
2\frac{2\lambda-1}{\lambda-1}\ln\frac{4}{\delta}
+
\ln(2n)
+
\ln\frac{B(\bbeta^{(1)})B(\bbeta^{(2)})}
{B(\balpha^{(1)})B(\balpha^{(2)})}
\\
&\qquad+
\frac{1}{\lambda-1}
\ln
\frac{
B\left(\lambda\balpha^{(1)}+(1-\lambda)\bbeta^{(1)}\right)
B\left(\lambda\balpha^{(2)}+(1-\lambda)\bbeta^{(2)}\right)
}{
B(\balpha^{(1)})B(\balpha^{(2)})
}
\Bigg]
\Bigg).
\end{aligned}
\label{eq:obj_viallard_DDBC_dir}
\end{equation}

\section{Experimental Details}\label{app:exp}

\subsection{Datasets}\label{app:datasets}

We consider nine binary and six multiclass classification datasets
obtained from UCI~\citep{dua2017uci},
\href{https://www.csie.ntu.edu.tw/~cjlin/libsvm/}{LIBSVM},
and Zalando~\citep{xiao2017fashion}.
Tables~\ref{tab:binary-datasets} and~\ref{tab:multiclass-datasets}
report the number of instances, the number of input features after
preprocessing, the number of classes, and the associated prediction task.

\begin{table}[ht!]
\centering
\caption{Binary classification datasets.}
\label{tab:binary-datasets}
\small
\begin{tabular}{llrrp{6cm}}
\toprule
Dataset & Source & $n$ & $d$ & Prediction task \\
\midrule
Haberman
& UCI & 306 & 3
& Survival after surgery. \\

TicTacToe
& UCI & 958 & 9
& Winning configurations for player~$x$. \\

Svmguide1
& LIBSVM & 7,089 & 4
& Binary classification without further description. \\

Mushrooms
& UCI & 8,124 & 22
& Edibility from categorical appearance features. \\

Phishing
& LIBSVM & 2,456 & 68
& Detection of phishing websites. \\

Adult
& LIBSVM & 32,561 & 123
& Prediction of whether annual income exceeds \$50K. \\

CodRNA
& LIBSVM & 59,535 & 8
& Detection of non-coding RNAs. \\

Splice
& UCI & 3,190 & 60
& Prediction of DNA splice junctions. \\

Australian
& LIBSVM & 690 & 14
& Credit approval prediction. \\
\bottomrule
\end{tabular}
\end{table}

\begin{table}[H]
\centering
\caption{Multiclass classification datasets.}
\label{tab:multiclass-datasets}
\small
\begin{tabular}{llrrrp{5cm}}
\toprule
Dataset & Source & $n$ & $d$ & Classes & Prediction task \\
\midrule
Pendigits
& UCI & 10,992 & 16 & 10
& Recognition of handwritten digits. \\

Protein
& LIBSVM & 24,387 & 357 & 3
& Classification of protein structures. \\

Shuttle
& UCI & 58,000 & 9 & 7
& Classification of space-shuttle conditions. \\

Sensorless
& LIBSVM & 58,509 & 48 & 11
& Prediction of motor operating conditions. \\

MNIST
& LIBSVM & 70,000 & 784 & 10
& Recognition of handwritten digits. \\

Fashion-MNIST
& Zalando & 70,000 & 784 & 10
& Recognition of clothing articles. \\
\bottomrule
\end{tabular}
\end{table}

\subsection{Complete Numerical Results}\label{subsec:numerical_results}

\Cref{tab:best_bounds_binary,tab:best_test_error_binary,tab:best_time_binary}
report the bound values, test errors, and training times, respectively,
on the binary datasets. The corresponding results on the multiclass
datasets are reported in
\Cref{tab:best_bounds_multiclass,tab:best_test_error_multiclass,tab:best_time_multiclass}.
All results are averaged over 10 independent runs and reported with
their standard deviations. Bound values and test errors are expressed
as percentages, whereas training times are expressed in seconds.

\begin{table*}[ht!]
\centering
\scriptsize
\setlength{\tabcolsep}{2pt}
\resizebox{\textwidth}{!}{%
\begin{tabular}{lccccccccc}
\toprule
Method & MUSHROOMS & TICTACTOE & SVMGUIDE & HABERMAN & PHISHING & CODRNA & ADULT & SPLICE & AUSTRALIAN \\
\midrule
FO & $12.30 \pm 0.19$ & $76.47 \pm 1.45$ & $17.81 \pm 0.21$ & $75.73 \pm 1.73$ & $25.73 \pm 0.31$ & $49.45 \pm 0.47$ & $46.93 \pm 0.18$ & $54.67 \pm 0.93$ & $43.32 \pm 1.19$ \\
SO & $21.02 \pm 0.24$ & $100.20 \pm 1.02$ & $28.56 \pm 0.27$ & $110.58 \pm 1.42$ & $35.64 \pm 0.22$ & $60.55 \pm 0.21$ & $53.27 \pm 0.09$ & $63.65 \pm 0.48$ & $69.46 \pm 1.23$ \\
Bin & $11.44 \pm 0.27$ & $79.50 \pm 0.88$ & $22.30 \pm 0.27$ & $80.36 \pm 0.73$ & $24.63 \pm 0.20$ & $36.20 \pm 0.13$ & $41.04 \pm 0.13$ & $46.40 \pm 0.44$ & $54.45 \pm 0.93$ \\
SMV-Exact & $\mathbf{4.68 \pm 0.09}$ & $\mathbf{42.74 \pm 0.63}$ & $\mathbf{9.26 \pm 0.15}$ & $\mathbf{42.17 \pm 0.84}$ & $\mathbf{13.56 \pm 0.11}$ & $\mathbf{15.90 \pm 2.99}$ & $\mathbf{24.82 \pm 1.73}$ & $32.07 \pm 22.65$ & $\mathbf{28.57 \pm 0.56}$ \\
SMV-MC & $5.34 \pm 0.12$ & $43.36 \pm 0.70$ & $9.44 \pm 0.15$ & $44.26 \pm 1.52$ & $15.25 \pm 0.12$ & $17.15 \pm 2.59$ & $26.74 \pm 0.02$ & $35.82 \pm 21.40$ & $29.47 \pm 0.76$ \\
DIS-V & $6.60 \pm 1.15$ & $48.69 \pm 1.32$ & $12.40 \pm 3.87$ & $54.77 \pm 4.79$ & $16.47 \pm 0.76$ & $20.18 \pm 4.38$ & $27.32 \pm 0.28$ & $31.72 \pm 1.23$ & $35.91 \pm 2.05$ \\
DIS-R & $5.95 \pm 1.27$ & $45.58 \pm 2.63$ & $9.80 \pm 0.84$ & $51.45 \pm 7.68$ & $15.26 \pm 0.33$ & $18.16 \pm 3.55$ & $26.79 \pm 0.25$ & $\mathbf{29.25 \pm 1.32}$ & $33.00 \pm 3.09$ \\
\bottomrule
\end{tabular}%
}
\caption{Mean bound value and standard deviation over 10 independent runs on binary datasets.}
\label{tab:best_bounds_binary}
\end{table*}

\begin{table*}[ht!]
\centering
\scriptsize
\setlength{\tabcolsep}{2pt}
\resizebox{\textwidth}{!}{%
\begin{tabular}{lccccccccc}
\toprule
Method & MUSHROOMS & TICTACTOE & SVMGUIDE & HABERMAN & PHISHING & CODRNA & ADULT & SPLICE & AUSTRALIAN \\
\midrule
FO & $4.60 \pm 0.36$ & $29.22 \pm 2.77$ & $6.26 \pm 0.37$ & $25.48 \pm 3.21$ & $11.26 \pm 0.58$ & $23.52 \pm 0.35$ & $22.14 \pm 0.36$ & $22.28 \pm 1.80$ & $16.09 \pm 2.04$ \\
SO & $4.60 \pm 0.36$ & $30.00 \pm 3.84$ & $6.26 \pm 0.37$ & $\mathbf{24.35 \pm 3.34}$ & $10.14 \pm 0.70$ & $22.93 \pm 1.43$ & $16.96 \pm 0.30$ & $12.79 \pm 1.33$ & $16.09 \pm 2.04$ \\
Bin & $\mathbf{1.08 \pm 0.22}$ & $\mathbf{27.71 \pm 3.60}$ & $5.29 \pm 0.45$ & $26.45 \pm 1.29$ & $\mathbf{6.82 \pm 0.41}$ & $\mathbf{12.38 \pm 0.25}$ & $\mathbf{16.21 \pm 0.36}$ & $\mathbf{7.10 \pm 1.05}$ & $\mathbf{15.94 \pm 2.15}$ \\
SMV-Exact & $1.34 \pm 0.21$ & $30.75 \pm 2.31$ & $\mathbf{5.08 \pm 0.34}$ & $26.67 \pm 2.75$ & $8.17 \pm 0.35$ & $12.83 \pm 3.53$ & $22.25 \pm 2.97$ & $14.39 \pm 12.31$ & $16.99 \pm 1.91$ \\
SMV-MC & $1.27 \pm 0.23$ & $30.57 \pm 2.30$ & $5.11 \pm 0.41$ & $27.10 \pm 1.85$ & $7.98 \pm 0.40$ & $13.39 \pm 3.36$ & $24.07 \pm 0.00$ & $14.46 \pm 12.30$ & $16.56 \pm 2.01$ \\
DIS-V & $1.98 \pm 1.46$ & $29.90 \pm 3.74$ & $6.80 \pm 2.91$ & $28.55 \pm 5.86$ & $8.22 \pm 0.67$ & $16.87 \pm 5.34$ & $24.07 \pm 0.00$ & $10.61 \pm 1.66$ & $16.23 \pm 2.13$ \\
DIS-R & $1.93 \pm 1.33$ & $29.90 \pm 2.88$ & $5.43 \pm 0.66$ & $26.77 \pm 1.07$ & $7.93 \pm 0.65$ & $14.67 \pm 4.24$ & $24.07 \pm 0.00$ & $10.09 \pm 1.48$ & $17.03 \pm 2.49$ \\

\bottomrule
\end{tabular}%
}
\caption{Mean test error and standard deviation over 10 independent runs on binary datasets.}
\label{tab:best_test_error_binary}
\end{table*}

\begin{table*}[ht!]
\centering
\scriptsize
\setlength{\tabcolsep}{3pt}
\begin{tabular}{lcccccc}
\toprule
Method & PENDIGITS & SHUTTLE & SENSORLESS & PROTEIN & MNIST & FASHION-MNIST \\
\midrule
FO & $21.69 \pm 0.30$ & $0.36 \pm 0.02$ & $2.19 \pm 0.18$ & $112.64 \pm 0.47$ & $47.16 \pm 0.30$ & $53.20 \pm 0.26$ \\
SO & $17.41 \pm 0.20$ & $0.47 \pm 0.02$ & $1.95 \pm 0.04$ & $138.57 \pm 0.37$ & $44.81 \pm 0.18$ & $58.69 \pm 0.19$ \\
Bin & $9.01 \pm 0.28$ & $0.30 \pm 0.02$ & $1.05 \pm 0.06$ & $127.29 \pm 3.02$ & $30.78 \pm 0.22$ & $44.25 \pm 0.24$ \\
SMV-Exact & $\mathbf{4.80 \pm 0.13}$ & $\mathbf{0.16 \pm 0.01}$ & $\mathbf{0.55 \pm 0.03}$ & $59.00 \pm 0.72$ & $\mathbf{15.94 \pm 0.11}$ & $\mathbf{22.56 \pm 0.12}$ \\
SMV-MC & $4.86 \pm 0.16$ & $\mathbf{0.16 \pm 0.01}$ & $\mathbf{0.55 \pm 0.03}$ & $\mathbf{58.77 \pm 0.41}$ & $15.98 \pm 0.11$ & $\mathbf{22.56 \pm 0.12}$ \\
DIS-V & $5.97 \pm 0.22$ & $0.27 \pm 0.02$ & $0.82 \pm 0.06$ & $60.27 \pm 0.42$ & $16.60 \pm 0.16$ & $23.26 \pm 0.18$ \\
DIS-R & $5.10 \pm 0.13$ & $0.18 \pm 0.02$ & $0.58 \pm 0.03$ & $59.17 \pm 0.50$ & $16.04 \pm 0.15$ & $22.61 \pm 0.16$ \\
\bottomrule
\end{tabular}
\caption{Mean bound value and standard deviation over 10 independent runs on multiclass datasets.}
\label{tab:best_bounds_multiclass}
\end{table*}

\begin{table*}[ht!]
\centering
\scriptsize
\setlength{\tabcolsep}{3pt}
\begin{tabular}{lcccccc}
\toprule
Method & PENDIGITS & SHUTTLE & SENSORLESS & PROTEIN & MNIST & FASHION-MNIST \\
\midrule
FO & $8.26 \pm 1.07$ & $0.07 \pm 0.02$ & $0.68 \pm 0.11$ & $\mathbf{54.37 \pm 0.59}$ & $22.41 \pm 0.33$ & $25.31 \pm 0.39$ \\
SO & $\mathbf{3.00 \pm 0.19}$ & $\mathbf{0.04 \pm 0.02}$ & $\mathbf{0.20 \pm 0.06}$ & $63.47 \pm 0.38$ & $\mathbf{14.34 \pm 0.17}$ & $\mathbf{21.11 \pm 0.38}$ \\
Bin & $3.18 \pm 0.19$ & $0.06 \pm 0.02$ & $0.21 \pm 0.06$ & $59.33 \pm 4.85$ & $14.48 \pm 0.19$ & $21.28 \pm 0.37$ \\
SMV-Exact & $3.44 \pm 0.22$ & $0.06 \pm 0.02$ & $0.23 \pm 0.05$ & $54.79 \pm 0.50$ & $14.87 \pm 0.18$ & $21.43 \pm 0.32$ \\
SMV-MC & $3.44 \pm 0.22$ & $0.06 \pm 0.02$ & $0.23 \pm 0.05$ & $54.91 \pm 0.46$ & $14.87 \pm 0.18$ & $21.43 \pm 0.32$ \\
DIS-V & $3.43 \pm 0.27$ & $0.06 \pm 0.02$ & $0.23 \pm 0.06$ & $55.00 \pm 0.46$ & $14.81 \pm 0.20$ & $21.38 \pm 0.29$ \\
DIS-R & $3.42 \pm 0.22$ & $0.06 \pm 0.02$ & $0.22 \pm 0.05$ & $55.43 \pm 0.69$ & $14.80 \pm 0.20$ & $21.35 \pm 0.30$ \\
\bottomrule
\end{tabular}
\caption{Mean test error and standard deviation over 10 independent runs on multiclass datasets.}
\label{tab:best_test_error_multiclass}
\end{table*}

\begin{table*}[ht!]
\centering
\scriptsize
\setlength{\tabcolsep}{2pt}
\resizebox{\textwidth}{!}{%
\begin{tabular}{lccccccccc}
\toprule
Method & MUSHROOMS & TICTACTOE & SVMGUIDE & HABERMAN & PHISHING & CODRNA & ADULT & SPLICE & AUSTRALIAN \\
\midrule
FO & $142.89 \pm 4.21$ & $50.95 \pm 6.23$ & $91.66 \pm 7.94$ & $47.74 \pm 2.69$ & $215.81 \pm 8.64$ & $2046.86 \pm 47.46$ & $1194.02 \pm 74.84$ & $58.22 \pm 8.35$ & $49.79 \pm 1.32$ \\
SO & $\mathbf{115.51 \pm 2.09}$ & $43.52 \pm 8.90$ & $87.70 \pm 8.83$ & $43.10 \pm 5.26$ & $201.53 \pm 4.41$ & $1858.73 \pm 34.27$ & $967.94 \pm 17.29$ & $\mathbf{55.98 \pm 10.25}$ & $\mathbf{41.63 \pm 8.78}$ \\
Bin & $386.12 \pm 73.43$ & $94.37 \pm 16.00$ & $399.55 \pm 1.58$ & $63.11 \pm 6.19$ & $595.18 \pm 2.01$ & $3365.27 \pm 20.95$ & $1456.36 \pm 4.35$ & $100.01 \pm 7.91$ & $86.93 \pm 17.93$ \\
SMV-Exact & $335.36 \pm 71.96$ & $101.23 \pm 1.35$ & $357.85 \pm 54.96$ & $62.91 \pm 4.64$ & $560.97 \pm 2.71$ & $3311.68 \pm 322.52$ & $1795.34 \pm 238.56$ & $122.14 \pm 24.57$ & $62.81 \pm 12.50$ \\
SMV-MC & $155.59 \pm 0.67$ & $59.40 \pm 1.18$ & $113.53 \pm 0.96$ & $51.32 \pm 0.38$ & $283.56 \pm 0.84$ & $857.40 \pm 349.01$ & $1159.25 \pm 5.17$ & $80.51 \pm 10.89$ & $49.92 \pm 0.73$ \\
DIS-V & $129.05 \pm 1.03$ & $54.53 \pm 0.63$ & $100.60 \pm 1.00$ & $48.93 \pm 0.35$ & $\mathbf{153.58 \pm 0.85}$ & $782.13 \pm 296.39$ & $\mathbf{437.99 \pm 2.54}$ & $65.77 \pm 0.86$ & $48.88 \pm 0.89$ \\
DIS-R & $455.16 \pm 6.81$ & $\mathbf{23.73 \pm 0.24}$ & $\mathbf{59.73 \pm 0.93}$ & $\mathbf{17.59 \pm 0.21}$ & $643.22 \pm 26.96$ & $\mathbf{548.73 \pm 211.72}$ & $602.67 \pm 9.85$ & $159.15 \pm 25.63$ & $50.71 \pm 0.70$ \\
\bottomrule
\end{tabular}%
}
\caption{Mean training time (s) and standard deviation over 10 independent runs on binary datasets.}
\label{tab:best_time_binary}
\end{table*}

\begin{table*}[ht!]
\centering
\scriptsize
\setlength{\tabcolsep}{3pt}
\begin{tabular}{lcccccc}
\toprule
Method & PENDIGITS & SHUTTLE & SENSORLESS & PROTEIN & MNIST & FASHION-MNIST \\
\midrule
FO & $226.80 \pm 29.27$ & $646.61 \pm 163.66$ & $362.61 \pm 66.34$ & $596.66 \pm 2.56$ & $2250.35 \pm 64.33$ & $1797.51 \pm 10.49$ \\
SO & $\mathbf{203.51 \pm 41.49}$ & $\mathbf{429.37 \pm 119.38}$ & $\mathbf{305.80 \pm 72.95}$ & $620.86 \pm 5.42$ & $2166.82 \pm 13.15$ & $1715.14 \pm 5.82$ \\
Bin & $565.99 \pm 146.37$ & $1751.15 \pm 563.66$ & $894.94 \pm 205.56$ & $1486.81 \pm 323.81$ & $5442.73 \pm 70.23$ & $4855.80 \pm 39.75$ \\
SMV-Exact & $617.51 \pm 69.39$ & $1888.17 \pm 342.51$ & $915.53 \pm 233.45$ & $1542.98 \pm 155.46$ & $5043.08 \pm 13.02$ & $4590.26 \pm 21.46$ \\
SMV-MC & $259.14 \pm 3.09$ & $826.03 \pm 3.38$ & $438.14 \pm 2.19$ & $633.41 \pm 3.73$ & $2263.86 \pm 24.81$ & $1845.17 \pm 6.89$ \\
DIS-V & $246.13 \pm 4.09$ & $768.09 \pm 15.07$ & $413.05 \pm 2.72$ & $\mathbf{588.34 \pm 2.56}$ & $\mathbf{2159.57 \pm 20.52}$ & $\mathbf{1657.76 \pm 9.32}$ \\
DIS-R & $240.02 \pm 3.07$ & $763.88 \pm 8.97$ & $458.38 \pm 7.54$ & $692.62 \pm 8.65$ & $2337.83 \pm 44.63$ & $1785.85 \pm 10.04$ \\

\bottomrule
\end{tabular}
\caption{Mean training time (s) and standard deviation over 10 independent runs on multiclass datasets.}
\label{tab:best_time_multiclass}
\end{table*}

\end{document}